%% file: resubmission.tex
\documentclass[11pt]{article}
\usepackage[preprint]{acl}

\usepackage{times,latexsym}
\usepackage[T1]{fontenc}
\usepackage[utf8]{inputenc}
\usepackage{microtype}
\usepackage{graphicx,booktabs,array}
\usepackage{amsmath,amssymb,amsthm}
\usepackage[table]{xcolor}
\definecolor{tableHead}{HTML}{EAF0F5}
\definecolor{tableBase}{HTML}{F2F3F5}
\definecolor{tableFocus}{HTML}{E7F2ED}
\usepackage{algorithm}
\usepackage{algpseudocode}
\usepackage{placeins}
\input{math_commands.tex}

\usepackage[capitalize,noabbrev]{cleveref}
\crefname{section}{Sec.}{Secs.}
\Crefname{section}{Sec.}{Secs.}
\crefname{table}{Tab.}{Tabs.}
\Crefname{table}{Tab.}{Tabs.}
\crefname{figure}{Fig.}{Figs.}
\Crefname{figure}{Fig.}{Figs.}
\crefname{algorithm}{Alg.}{Algs.}
\Crefname{algorithm}{Alg.}{Algs.}
\crefname{equation}{Eq.}{Eqs.}
\Crefname{equation}{Equation}{Equations}
\newcommand{\appref}[1]{Appendix~\labelcref{#1}}
\newtheorem{lemma}{Lemma}

\newtheorem{proposition}{Proposition}
\newtheorem{corollary}{Corollary}
\usepackage{array}
\newcolumntype{Y}{w{c}{24pt}}
\newcolumntype{Z}{w{c}{22pt}}
\newcolumntype{V}{w{c}{26pt}}
\newcolumntype{X}{w{c}{27pt}}
\newcolumntype{G}{w{c}{32pt}}
\newcommand{\pol}{\pi_{\vtheta}}
\newcommand{\refpol}{\pi_{\mathrm{ref}}}

\newcommand{\energy}{\mathcal E}
\newcommand{\dpo}{\mathrm{DPO}}
\newcommand{\metric}[2]{$#1{\scriptstyle\pm#2}$}
\newcommand{\bestmetric}[2]{\metric{#1}{#2}}

\title{Stochastic Pruning of the DPO Computation Graph:\\ 1.4$\times$ Faster Training and Lower Peak Memory}

\author{
  Yubing Wu\textsuperscript{1,\textdagger}\quad Wei Chen\textsuperscript{1,\textdagger}\quad Junmei Yang\textsuperscript{1}\quad Delu Zeng\textsuperscript{1,*}\quad Tiewen Pan\textsuperscript{2}
  \AND
  \textsuperscript{1}South China University of Technology\\
  \textsuperscript{2} \large\bfseries Shenzhen Technology University\\
  \textsuperscript{\textdagger}Equal contribution.\quad
  \textsuperscript{*}Corresponding author: \texttt{dlzeng@scut.edu.cn}
}

\begin{document}
\maketitle

\begin{abstract}
Direct preference optimization (DPO) incurs training costs in output scoring, sequence execution, and backpropagation, so reducing supervision alone does not necessarily save computation. We introduce a unified framework for stochastic pruning of the DPO computation graph that independently samples response positions for scoring, response prefixes for execution, and preference branches for backpropagation. By making these selections before the corresponding operations, the framework omits computation at each stage of a training step. Our analysis characterizes the objective changes induced by position and sequence pruning and uses inverse-probability weighting to obtain a conditionally unbiased single-branch gradient estimator for the sampled objective. Controlled measurements show that position and sequence pruning primarily reduce peak memory, with most output-memory savings attributable to an implementation that retains full response supervision. Branch pruning yields a $1.426\times$ step-time speedup over a concatenated DPO baseline and reduces training time by about 30\% relative to stock DPO across two model families, with slightly lower mean accuracy and benchmark-dependent changes in open-ended scores. These findings relate the resource benefits of stochastic graph pruning to the computation omitted and the resulting changes in the training objective.
\end{abstract}

\section{Introduction}
\label{sec:intro}
Direct preference optimization (DPO) formulates alignment as a logistic comparison of policy--reference likelihood ratios, with gradient contributions from both chosen and rejected responses \citep{rafailov2023direct}. In large-vocabulary language-model training, materialized logits can dominate memory consumption \citep{cce2025}, and attention I/O and operator fusion also affect memory use and training throughput \citep{dao2022flashattention,liger2025}. These costs motivate our question of which computations a preference-training step can omit.

\begin{figure*}[t]
\centering
\includegraphics[width=0.95\textwidth]{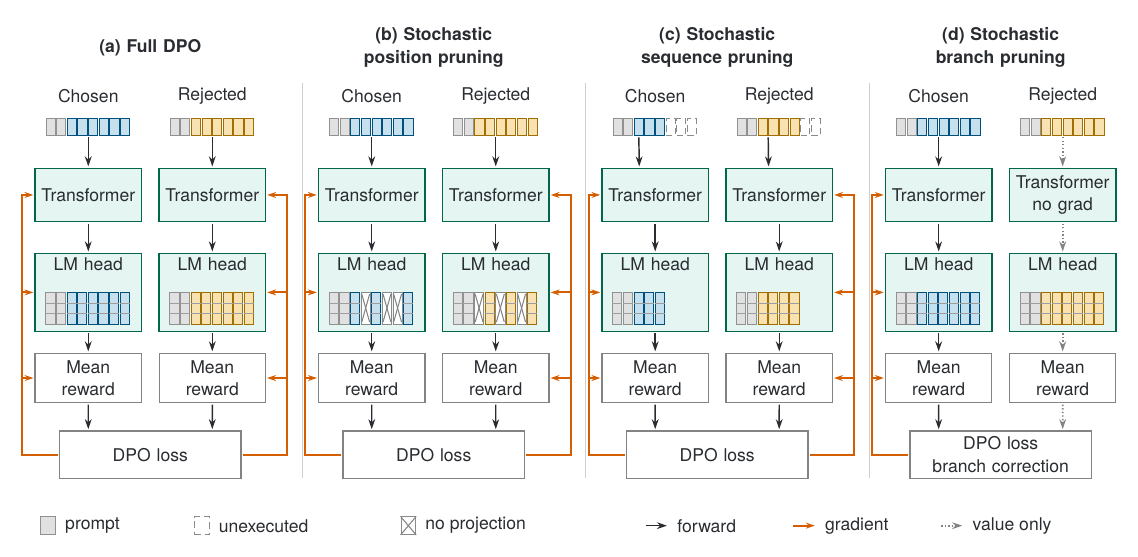}
\caption{Stochastic interventions on the DPO computation graph. The two response branches share policy parameters. Rewards average policy--reference log-ratios over scored response positions. Fixed-reference computations use matching prefixes and positions and are omitted. Sampling outcomes and counts are schematic.}
\label{fig:axes}
\end{figure*}

Prior work changes token supervision through confidence-based token selection or weighting \citep{confpo-2025,bmc2025}, reward-based selection \citep{sepo-2024}, and weighting from contrastive model predictions \citep{tis-dpo-2024}, while memory-efficient kernels change output-score computation \citep{cce2025,liger2025}. This distinction motivates our question: \emph{which parts of a DPO step can be omitted, and how do their resource savings relate to changes in the optimization objective?}

Preference-objective structure affects optimization \citep{chen2026disentangled}, while shared-parameter updates can couple response likelihoods \citep{yuan2025a} and reduce preferred-response probability \citep{razin2025unintentional}. Our token-level analysis examines this coupling (\appref{app:coupled}), and \cref{prop:random-loss} separates reward preservation in expectation from the nonlinear loss.

We introduce a framework that samples the DPO computation graph to control training cost (\cref{fig:axes}). Each training step jointly selects the executed response prefixes, the positions used to estimate rewards, and the branches that carry gradients. These selections precede sequence execution, vocabulary projection, and backpropagation, allowing computation to be omitted throughout the step. This view unifies position, sequence, and branch pruning through their effects on the sampled objective and its gradient. Our analysis characterizes the objective changes induced by reward sampling, while inverse-probability weighting makes the branch gradient conditionally unbiased for that objective.

Our contributions are:
\begin{itemize}
\item a unified framework for sampling DPO computation across response execution, token scoring, and backpropagation, with separately configurable sampling decisions.
\item a characterization of the position and prefix objective changes, tolerance conditions that relate their optimal gaps to the retention rate and the truncation level, and a conditionally unbiased branch estimator with a stated convergence rate.
\item controlled comparisons showing that full supervision retains most output-memory savings, that branch pruning reduces training time across model families, and that mean accuracy is slightly lower while open-ended score changes vary by benchmark (\cref{sec:efficiency,sec:quality,sec:c-validation}).
\end{itemize}

\section{Related Work}
\label{sec:related}
\paragraph{Token-level credit in preference optimization.}
A growing family of methods changes where DPO assigns credit through confidence-based token selection \citep{confpo-2025}, confidence-dependent weighting of varied tokens \citep{bmc2025}, trained oracles for token rewards \citep{sepo-2024}, and contrastive token weights \citep{tis-dpo-2024}. Other approaches use optimal-transport reweighting \citep{li2025optimal}, gradient-attribution priors \citep{yang2026tokenimportance}, prefix-level ratio matching \citep{tbpo-2025}, implicit-reward credit \citep{sedpo2026}, attention-derived weights \citep{attentionpo2026}, minimal-edit emphasis in clinical dialogue \citep{tddpo2026}, or token-level KL control \citep{zeng2024token}. Our framework places supervision selection within the computation of a complete DPO step. Matched selection controls motivate uniform membership (\cref{sec:motivation}), and full-retention controls distinguish supervision deletion from implementation savings (\cref{tab:a-attribution}).

\paragraph{Efficient preference training.}
Cut Cross-Entropy avoids materializing the token-by-vocabulary logit matrix \citep{cce2025}, and Liger-Kernel uses fusion and chunked cross-entropy computation to reduce memory and improve training throughput \citep{liger2025}. FlashAttention relates attention efficiency to memory accesses rather than FLOP counts alone \citep{dao2022flashattention}, while activation checkpointing trades recomputation for lower activation storage \citep{chen2016trainingdeepnetssublinear}. Related efficiency work covers flow training \citep{du2022toflow} and diffusion inference \citep{li2025evodiff}. SePO reports reduced GPU training hours through offline token selection \citep{sepo-2024}. Our framework considers how sampling throughout a DPO step changes both executed computation and the training objective. Controlled profiles separate implementation savings from the effects of reduced supervision and execution (\cref{sec:efficiency}).

\paragraph{Stochastic computation and optimization dynamics.}
Dropout randomly removes units \citep{srivastava2014dropout} and stochastic depth bypasses sampled layers \citep{huang2016deep}, while importance sampling can reduce stochastic-gradient variance using gradient-dependent probabilities \citep{katharopoulos2018importance}. Preference-optimization analyses study gradient entanglement \citep{yuan2025a}, likelihood displacement \citep{razin2025unintentional}, logit-space squeezing \citep{logitssam2026}, and gated rejected contributions \citep{gate-dpo-2025}. Density-ratio estimation provides a distribution-comparison perspective \citep{chen2025dequantified} for preference objectives such as DDRO \citep{higuchi2025ddro}, whose update structure is analyzed alongside margin-based losses \citep{chen2026disentangled}. Our analysis separates objective changes from gradient-estimation noise under composed sampling (\cref{prop:branch-unbiased,sec:composition}). Shared-parameter response changes are analyzed in \appref{app:coupled}.

\section{Stochastic Graph Pruning}
\label{sec:method}
\subsection{Formulation}
\label{sec:problem}
\label{sec:objective}
\label{sec:notation}
\label{sec:graph}
Let $\train=\{(\vx,\vy_w,\vy_l)\}$ contain prompts $\vx$ with chosen and rejected responses $\vy_w,\vy_l$. Write $\pol$ for the policy with trainable parameters $\vtheta$ and $\refpol$ for the fixed reference \citep{rafailov2023direct}. For side $b\in\{w,l\}$, let $T_b\geq1$ count response tokens, excluding prompt and padding. At position $t$, define the target-token log-ratio
\begin{equation}
u_{b,t}=\log\frac{\pol(\evy_{b,t}\mid\vx,\vy_{b,<t})}{\refpol(\evy_{b,t}\mid\vx,\vy_{b,<t})}.
\label{eq:log-ratio}
\end{equation}
Here $\evy_{b,t}$ is the target token and $\vy_{b,<t}$ its preceding response prefix. With mean rewards $R_b=T_b^{-1}\sum_{t=1}^{T_b}u_{b,t}$, margin $\Delta=R_w-R_l$, and scale $\beta>0$, DPO minimizes the loss \citep{rafailov2023direct}:
\begin{equation}
F(\vtheta)=\E_{\train}[\Ls_{\dpo}]=\E_{\train}[\log\bigl(1+e^{-\beta \Delta}\bigr)].
\label{eq:dpo-loss}
\end{equation}
This length normalization parallels SimPO \citep{meng2024simpo}, but retains policy--reference log-ratios.

\Cref{fig:axes} separates three costs: dense vocabulary projection materializes $T_b\times V$ logits for vocabulary size $V$ \citep{cce2025}, sequence execution computes attention and stores or recomputes activations \citep{dao2022flashattention,chen2016trainingdeepnetssublinear}, and backpropagation includes both responses \citep{rafailov2023direct}.

Write $\sI_M=\{1,\ldots,M\}$. On side $b$, sequence pruning selects length $M_b\in\sI_{T_b}$ through cutoff fraction $\rho_b\in(0,1]$, retaining $\tilde\vy_b=\vy_{b,1:M_b}$. Position pruning selects $\sS_b\subseteq\sI_{M_b}$ with $k_b=|\sS_b|$ controlled by retention $\alpha\in(0,1]$. Branch pruning selects $b^\star\in\{w,l\}$ with probabilities $q_w,q_l>0$, $q_w+q_l=1$. All sampling laws are parameter-independent, and rewards average over selected positions. We use $\sigmoid$ for the sigmoid and $\sg$ for stop-gradient, which preserves values and blocks derivatives. \appref{app:notation} summarizes the notation.

\subsection{Stochastic Position Pruning}
\label{sec:method-position}
Position pruning retains a uniform subset of response positions for scoring while preserving the executed response as context. Draw $\sS_b\sim\positionsample(M_b,\alpha)$ by
\begin{equation}
\begin{aligned}
k_b&=\max(1,\ceil{\alpha M_b}),\\
\positionsample(M_b,\alpha)
&=\operatorname{Unif}\{\sS\subseteq\sI_{M_b}:|\sS|=k_b\}.
\end{aligned}
\label{eq:position-sampling}
\end{equation}
Standalone position pruning uses $M_b=T_b$. Define the selected-position reward
\begin{equation}
R_b(\sS_b)=\frac1{k_b}\sum_{t\in\sS_b}u_{b,t},
\label{eq:masked-reward}
\end{equation}
with masked margin $\Delta=R_w(\sS_w)-R_l(\sS_l)$.
The subset is drawn before the language-model head, without computing policy probabilities. Applying the head only at $\sS_b$ evaluates \cref{eq:masked-reward} exactly, skipping $M_b-k_b$ vocabulary projections, their logits, and their backward traffic.

Uniform sampling preserves the mean reward, but the nonlinear loss introduces a gap.

\begin{proposition}
\label{prop:random-loss}
For a fully executed response, let $\sU_k$ be a uniform size-$k$ subset of $\sI_T$ and $\vphi_t=\nabla_{\vtheta}\log\pol(\evy_t\mid\vx,\vy_{<t})$. Let $\Delta_U$ be the margin under independent side masks and $\Delta_{\mathrm{full}}$ the full-response margin. At any fixed policy,
\begin{align}
\E[R(\sU_k)]&=R(\sI_T),\nonumber\\
\E\Bigl[\frac1k\sum_{t\in\sU_k}\vphi_t\Bigr]&=\frac1T\sum_{t=1}^T\vphi_t,
\label{eq:random-unbiased}
\end{align}
and
\begin{equation}
0\leq\E[\ell(\Delta_U)]-\ell(\Delta_{\mathrm{full}})\leq\frac{\beta^2}{8}\Var(\Delta_U),
\label{eq:random-loss-gap}
\end{equation}
where $\Var$ is over the sampled masks and $\Var(\Delta_U)=\sum_{b:T_b>1}\frac{T_b-k_b}{k_b(T_b-1)}\,v_b$. Here $\bar u_b=T_b^{-1}\sum_tu_{b,t}$ and $v_b=T_b^{-1}\sum_t(u_{b,t}-\bar u_b)^2$ are the population mean and variance. Single-token sides contribute zero variance.
\end{proposition}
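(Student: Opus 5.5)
The argument has three parts: linearity of expectation for the unbiasedness claims, a second-order expansion of the loss $\ell(\Delta)=\log(1+e^{-\beta\Delta})$ for the gap, and the finite-population formula for the variance.

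\textbf{Unbiasedness.} For a uniform size-$k$ subset $\sU_k$ of $\sI_T$, each position has inclusion probability $\Pr(t\in\sU_k)=k/T$. Writing $\frac1k\sum_{t\in\sU_k}u_t=\frac1k\sum_{t=1}^T\mathbf 1[t\in\sU_k]\,u_t$, we get $\E[R(\sU_k)]=\frac1k\sum_t\frac kT u_t=R(\sI_T)$. The gradient identity follows the same way with $\vphi_t$ in place of $u_t$. This is legitimate because the mask law does not depend on $\vtheta$, so differentiation and expectation over masks commute at a fixed policy. The reference log-probabilities carry no $\vtheta$-dependence, so $\nabla_{\vtheta}u_t=\vphi_t$.

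Because the two side masks are independent and each is unbiased, $\E[\Delta_U]=\Delta_{\mathrm{full}}$. Moreover, $\Var(\Delta_U)=\Var(R_w(\sU))+\Var(R_l(\sU))$, with the cross term vanishing by independence.

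\textbf{Loss gap.} Compute $\ell'(\Delta)=-\beta\,\sigmoid(-\beta\Delta)$ and $\ell''(\Delta)=\beta^2\sigmoid(\beta\Delta)\sigmoid(-\beta\Delta)$. The second derivative lies in $(0,\beta^2/4]$, since $p(1-p)\le 1/4$. Convexity and Jensen's inequality with $\E[\Delta_U]=\Delta_{\mathrm{full}}$ give the lower bound $0$.

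For the upper bound, Taylor's theorem with Lagrange remainder gives
\[
\ell(\Delta_U)=\ell(\Delta_{\mathrm{full}})+\ell'(\Delta_{\mathrm{full}})(\Delta_U-\Delta_{\mathrm{full}})+\tfrac12\ell''(\xi)(\Delta_U-\Delta_{\mathrm{full}})^2 .
\]
Take expectations. The linear term vanishes by unbiasedness, and the remainder is at most $\frac12\cdot\frac{\beta^2}{4}\E[(\Delta_U-\Delta_{\mathrm{full}})^2]=\frac{\beta^2}{8}\Var(\Delta_U)$. All quantities are finite because the masks take finitely many values.

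\textbf{Variance formula.} This step needs the most care. The task is the classical variance of a sample mean under simple random sampling without replacement. Let $I_t=\mathbf 1[t\in\sU_k]$. Then $\Var(I_t)=\frac kT(1-\frac kT)$, and for $s\ne t$,
\[
\operatorname{Cov}(I_s,I_t)=\frac{k(k-1)}{T(T-1)}-\frac{k^2}{T^2}=-\frac{k(T-k)}{T^2(T-1)}.
\]
Expand $\Var(\frac1k\sum_tI_tu_t)$ and use $\sum_{s\ne t}u_su_t=(\sum_tu_t)^2-\sum_tu_t^2$. Equivalently, first replace $u_t$ by $u_t-\bar u$: this does not change the variance, because $\sum_tI_t=k$ is deterministic. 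The result is the finite-population-correction expression
\[
\Var(R(\sU_k))=\frac{T-k}{k(T-1)}\,v,\qquad v=\frac1T\sum_t(u_t-\bar u)^2 .
\]
Summing the two independent sides gives the stated $\Var(\Delta_U)$. When $T_b=1$, we have $k_b=1$, the mask is deterministic, and the side contributes zero variance. This is why the sum is restricted to $T_b>1$, and it also avoids the $0/0$ in the formula. The same zero-variance conclusion applies whenever $k_b=T_b$.
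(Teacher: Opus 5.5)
Your proposal is correct and follows the same route as the paper's proof. You use inclusion indicators with $\E I_t=k/T$ for unbiasedness, Jensen's inequality for the lower bound, and the bound $\ell''\le\beta^2/4$ in a second-order expansion around $\Delta_{\mathrm{full}}$ for the upper bound. For the variance you use the finite-population formula from $\E[I_iI_j]=k(k-1)/[T(T-1)]$, summed over independent sides, and you simply supply the covariance algebra and centering step that the paper leaves implicit.
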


At $\alpha=1$, selected-head execution preserves the full-response objective while avoiding projection outside valid response positions (\cref{tab:a-attribution}).

To bound the sampled optimum, assume throughout this subsection and \cref{app:proof-optimality,app:proof-bias} that $|u_{b,t}|\leq U$, $\|\nabla_{\vtheta}u_{b,t}\|\leq G$, $v_b\leq\bar v$, and $T_b\geq T_0\geq2$ on the analyzed region. The objectives are twice differentiable there and attain their minima.

\begin{corollary}[Optimality gap of position pruning]
\label{cor:position-optimality}
Let $\vtheta^*$ minimize the full-response objective $F$ and $\vtheta_A^*$ minimize $F_A(\vtheta)=\E_{\train}\E_S[\ell(\Delta_U)]$. Then
\begin{equation}
0\leq F(\vtheta_A^*)-F(\vtheta^*)\leq\frac{\beta^2}{8}\E_{\train}\bigl[\Var_S(\Delta_U)\bigr].
\label{eq:position-optimality}
\end{equation}
\end{corollary}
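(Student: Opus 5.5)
The argument reduces the corollary to a pointwise sandwich between $F$ and $F_A$ and then applies it at the two minimizers. Write $D(\vtheta)=F_A(\vtheta)-F(\vtheta)=\E_{\train}\bigl[\E_S[\ell(\Delta_U)]-\ell(\Delta_{\mathrm{full}})\bigr]$. Applying \cref{prop:random-loss} example by example at the fixed parameter $\vtheta$ and averaging over $\train$ gives the pointwise sandwich
\begin{equation*}
0\leq D(\vtheta)\leq \frac{\beta^2}{8}\E_{\train}\bigl[\Var_S(\Delta_U)\bigr](\vtheta)
\end{equation*}
for every $\vtheta$ in the analyzed region. Here $\ell(z)=\log(1+e^{-\beta z})$. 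Its second derivative is $\beta^2\sigmoid(\beta z)(1-\sigmoid(\beta z))\leq\beta^2/4$, so the proposition's Jensen and second-order Taylor bounds apply. The variance term is evaluated at $\vtheta$ through $v_b(\vtheta)$.

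Both inequalities then follow from a standard comparison of minima. For the lower bound, $\vtheta^*$ minimizes $F$, so $F(\vtheta_A^*)\geq F(\vtheta^*)$ immediately. For the upper bound, I chain
\begin{equation*}
F(\vtheta_A^*)\leq F_A(\vtheta_A^*)\leq F_A(\vtheta^*)\leq F(\vtheta^*)+\frac{\beta^2}{8}\E_{\train}\bigl[\Var_S(\Delta_U)\bigr]\big|_{\vtheta^*}.
\end{equation*}
The first step uses $D\geq0$ at $\vtheta_A^*$. The second uses optimality of $\vtheta_A^*$ for $F_A$. The third uses the upper half of the sandwich at $\vtheta^*$. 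This proves the bound with the variance evaluated at the full optimum.

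The main obstacle is the ambiguity in the statement: it does not say at which parameter the right-hand side variance is evaluated. I would state the result at $\vtheta^*$, as the chain above naturally produces. If a parameter-free bound is preferred, I would use the standing assumptions $v_b\leq\bar v$ and $T_b\geq T_0$. Together with the explicit formula $\Var(\Delta_U)=\sum_b\frac{T_b-k_b}{k_b(T_b-1)}v_b$ and $k_b\geq\alpha T_b$, they give the uniform bound
\begin{equation*}
\Var_S(\Delta_U)\leq 2\,\frac{1-\alpha}{\alpha}\,\frac{\bar v}{T_0-1}.
\end{equation*}
This holds at any parameter, so the statement is valid under either reading.

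A secondary point is that $\vtheta^*$ and $\vtheta_A^*$ must lie in the analyzed region, where the proposition's hypotheses and the existence of minima are assumed. I would note that the boundedness assumptions on $U$ and $G$ are not needed for this corollary, only for later results.
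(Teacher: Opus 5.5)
Your proposal is correct and follows the paper's proof: Jensen's inequality (via $\E_S[\Delta_U]=\Delta_{\mathrm{full}}$) gives $F\leq F_A$ pointwise, and the chain $F(\vtheta_A^*)\leq F_A(\vtheta_A^*)\leq F_A(\vtheta^*)\leq F(\vtheta^*)+\frac{\beta^2}{8}\E_{\train}[\Var_S(\Delta_U)]$, with the variance evaluated at $\vtheta^*$, is the paper's argument. The paper settles the evaluation-point question the same way, stating right after the corollary that the variance is taken at $\vtheta^*$, and your uniform bound is its follow-up $\E_{\train}[\Var_S(\Delta_U)]\leq 2\bar v\,r_A$.
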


The variance is evaluated at $\vtheta^*$ over masks, then averaged over data. Since $k_b\geq\alpha T_b$, \cref{prop:random-loss} gives $\E_{\train}[\Var_S(\Delta_U)]\leq2\bar v\,r_A$, where
\begin{equation}
r_A:=\frac{1-\alpha}{\alpha(T_0-1)}.
\label{eq:rA}
\end{equation}
Thus the bound decays as $1/T_0$ at fixed retention and vanishes at $\alpha=1$. A sufficient condition for loss tolerance $\varepsilon_F>0$ is
\begin{equation}
\alpha\geq\frac{1}{1+4\varepsilon_F(T_0-1)/(\beta^2\bar v)},
\label{eq:alpha-rule}
\end{equation}

A second-order expansion also gives $\|\nabla F_A-\nabla F\|=\mathcal{O}(r_A)$ (\appref{app:proof-bias}). If $\nabla^2F\succeq\mu I$, $\mu>0$, along the segment joining interior stationary points of $F_A$ and $F$ within the analyzed region, their displacement is $\mathcal{O}(r_A)$ rather than $\mathcal{O}(\sqrt{r_A})$. The loss bound requires no curvature assumption. Sufficient curvature conditions appear in \appref{app:proof-curvature}.

\subsection{Stochastic Sequence Pruning}
\label{sec:method-prefix}
Sequence pruning truncates each response before policy and reference execution. For minimum retained fraction $\rho_{\min}\in(0,1)$, draw $(\tilde\vy_b,M_b)\sim\prefixsample(\vy_b,\rho_{\min})$ by
\begin{equation}
\begin{aligned}
\rho_b&\sim\mathcal U(\rho_{\min},1),\\
M_b&=\max(1,\ceil{\rho_bT_b}),\\
\tilde\vy_b&=\vy_{b,1:M_b}.
\end{aligned}
\label{eq:prefix-sampling}
\end{equation}

Both models repack the prompt with $\tilde\vy_b$, removing suffix attention, activations, and recomputation. The reward is $R_b=M_b^{-1}\sum_{t=1}^{M_b}u_{b,t}$, and $\rho_b\equiv1$ recovers the full-response objective. \Cref{sec:sensitivity} evaluates $\rho_{\min}$.

Prefix inclusion and length normalization favor earlier positions:
\begin{lemma}[Implicit temporal weighting]
\label{lem:temporal-weight}
For one response of length $T$, let $M\in\{1,\ldots,T\}$ be the sampled cutoff, with law $q(m)=\Pr(M=m)$ and prefix reward $R_M=M^{-1}\sum_{t=1}^{M}u_t$. Define the temporal weights $w_t:=\sum_{m=t}^{T}q(m)/m$. Then
\begin{equation}
\E_M[R_M]=\sum_{t=1}^{T}w_tu_t.
\label{eq:temporal-weight}
\end{equation}
The weights satisfy $w_1\geq\cdots\geq w_T\geq0$ and $\sum_{t=1}^{T}w_t=1$. Uniform weights $w_t=1/T$ occur exactly when $M=T$ almost surely.
\end{lemma}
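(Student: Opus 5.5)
The plan is to prove the identity \cref{eq:temporal-weight} by expanding the expectation over the cutoff and exchanging the order of summation. By definition,
\[
\E_M[R_M]=\sum_{m=1}^{T}q(m)\,\frac1m\sum_{t=1}^{m}u_t .
\]
This is a finite double sum over the index set $\{(t,m):1\le t\le m\le T\}$. Swapping the sums gives
\[
\E_M[R_M]=\sum_{t=1}^{T}u_t\sum_{m=t}^{T}\frac{q(m)}{m}=\sum_{t=1}^{T}w_tu_t .
\]
No convergence issues arise because everything is finite.

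For the weight properties, the key observation is $w_t-w_{t+1}=q(t)/t\ge0$ for $t<T$, and $w_T=q(T)/T\ge0$. This gives both monotonicity and nonnegativity. To get normalization, I will set $u_t\equiv1$ in the identity just proved. Then $R_M=1$ for every $M$, so $\sum_t w_t=\E_M[1]=\sum_m q(m)=1$. Alternatively, one can swap the sums directly:
\[
\sum_{t}\sum_{m\ge t}\frac{q(m)}{m}=\sum_m \frac{q(m)}{m}\cdot m=1 .
\]

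The characterization of uniform weights is the only step that needs care. If $M=T$ almost surely, then $q(T)=1$ and $q(m)=0$ otherwise, so $w_t=1/T$ for every $t$. For the converse, suppose $w_t=1/T$ for all $t$. The successive differences $q(t)/t=w_t-w_{t+1}$ then vanish for $t=1,\dots,T-1$, so $q(t)=0$ for $t<T$. Hence $q(T)=1$, and indeed $w_T=q(T)/T=1/T$ is consistent. So $M=T$ almost surely.

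I expect no real obstacle. The main point to state carefully is the telescoping relation $w_t-w_{t+1}=q(t)/t$, since it yields monotonicity and the converse direction at once. It is also worth noting that the lemma holds for an arbitrary cutoff law $q$, so it applies in particular to the law induced by \cref{eq:prefix-sampling}. There $q(1)$ may absorb the $\max(1,\cdot)$ mass, and $q(m)$ need not be uniform.
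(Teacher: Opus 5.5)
Your proposal is correct and follows essentially the same route as the paper's proof: you exchange the finite double sum, use $w_t-w_{t+1}=q(t)/t$ and $w_T=q(T)/T$ for monotonicity and nonnegativity, swap sums for normalization, and read off $q(t)=0$ for $t<T$ from uniform weights. The paper additionally uses the inversion $q(t)=t(w_t-w_{t+1})$, with $w_{T+1}:=0$, to show that every nonnegative, normalized, non-increasing weight vector comes from some cutoff law, which goes beyond what the statement requires.
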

\appref{app:proof-temporal} proves this identity and characterizes the cutoff laws. The nonlinear loss adds a Jensen gap:

\begin{lemma}[Prefix Jensen gap]
\label{lem:prefix-gap}
Let $\Delta_M$ be the random-prefix margin, $\mu_{\mathrm{pref}}=\E[\Delta_M]$ its mean, and $\Ls_{\mathrm{pref}}=\E[\ell(\Delta_M)]$ the expected prefix loss with Jensen gap $J_{\mathrm{pref}}$. Then
\begin{equation}
\begin{aligned}
\Ls_{\mathrm{pref}}&=\ell(\mu_{\mathrm{pref}})+J_{\mathrm{pref}},\\
0\leq J_{\mathrm{pref}}&\leq\frac{\beta^2}{8}\Var(\Delta_M).
\end{aligned}
\label{eq:prefix-gap}
\end{equation}
\end{lemma}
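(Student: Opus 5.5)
The plan is to treat $\ell(\Delta)=\log(1+e^{-\beta\Delta})$ as a scalar function of the random margin $\Delta_M=R_{w,M_w}-R_{l,M_l}$. Both conclusions then follow from two properties of $\ell$: it is convex, and its second derivative is uniformly bounded by $\beta^2/4$. The prefix structure enters only through $\mu_{\mathrm{pref}}$ and $\Var(\Delta_M)$. Those quantities are well defined because, at a fixed policy, $\Delta_M$ takes finitely many values: $M_w$ and $M_l$ range over finite sets.

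\textbf{Identity and lower bound.} The decomposition $\Ls_{\mathrm{pref}}=\ell(\mu_{\mathrm{pref}})+J_{\mathrm{pref}}$ is just the definition $J_{\mathrm{pref}}:=\E[\ell(\Delta_M)]-\ell(\E[\Delta_M])$. Nonnegativity follows from Jensen's inequality once convexity is checked. Differentiating gives
\[
\ell'(\Delta)=-\beta\,\sigmoid(-\beta\Delta),
\]
and then
\[
\ell''(\Delta)=\beta^2\,\sigmoid(\beta\Delta)\bigl(1-\sigmoid(\beta\Delta)\bigr)\geq0.
\]

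\textbf{Upper bound.} Since $p(1-p)\leq1/4$ for $p\in[0,1]$, we have $0\leq\ell''\leq\beta^2/4$ everywhere. I will apply Taylor's theorem with Lagrange remainder around $\mu:=\mu_{\mathrm{pref}}$:
\[
\ell(\Delta_M)=\ell(\mu)+\ell'(\mu)(\Delta_M-\mu)+\tfrac12\ell''(\xi)(\Delta_M-\mu)^2,
\]
where $\xi$ lies between $\mu$ and $\Delta_M$. Taking expectations, the linear term vanishes because $\E[\Delta_M-\mu]=0$. The remaining term is at most $\frac12\cdot\frac{\beta^2}{4}\E[(\Delta_M-\mu)^2]=\frac{\beta^2}{8}\Var(\Delta_M)$. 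This is the same argument that gives \cref{eq:random-loss-gap} in \cref{prop:random-loss}, now with $\Delta_U$ replaced by $\Delta_M$ and $\Delta_{\mathrm{full}}$ replaced by the prefix mean.

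\textbf{Main subtlety.} There is no real obstacle here; the only point needing care is where the comparison point sits. In \cref{prop:random-loss} the comparison point $\Delta_{\mathrm{full}}$ equals the mean of the sampled margin. Here it does not: $\mu_{\mathrm{pref}}$ is the mean of the temporally weighted rewards from \cref{lem:temporal-weight}, $\sum_t w_{w,t}u_{w,t}-\sum_t w_{l,t}u_{l,t}$, which generally differs from the full-response margin. So the lemma bounds only the Jensen component, relative to $\ell(\mu_{\mathrm{pref}})$. The separate bias coming from the reweighting $w_t$ versus $1/T$ is not part of this statement, and I will keep the two effects apart.
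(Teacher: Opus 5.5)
Your proof is correct and follows the paper's argument essentially step for step. You define $J_{\mathrm{pref}}$ as the Jensen gap, obtain nonnegativity from convexity via $\ell''=\beta^2\sigmoid(\beta\Delta)\sigmoid(-\beta\Delta)\ge0$, and obtain the upper bound from a Lagrange-remainder expansion around $\mu_{\mathrm{pref}}$, where the linear term vanishes in expectation and $\ell''\le\beta^2/4$; your observation that $\mu_{\mathrm{pref}}$ is the temporally reweighted mean rather than $\Delta_{\mathrm{full}}$ also matches how the paper separates the mean shift from the Jensen term.
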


Unlike position pruning, sequence pruning can shift the mean margin. The shift $\ell(\mu_{\mathrm{pref}})-\ell(\Delta_{\mathrm{full}})$ has no fixed sign, whereas $J_{\mathrm{pref}}\geq0$ vanishes for deterministic cutoffs or $\beta=0$ (\appref{app:proof-prefix}). Parameter-independent cutoffs make $\nabla_{\vtheta}\ell(\Delta_M)$ unbiased for $\nabla_{\vtheta}\Ls_{\mathrm{pref}}$, generally different from the full-response gradient. \Cref{sec:quality} evaluates the resulting performance.

Assume causal execution with the prompt retained, so prefix token scores and gradients match full execution. For truncation fraction $h_b=1-M_b/T_b$, \cref{eq:prefix-sampling} gives $\E[h_b]\leq(1-\rho_{\min})/2$. Prefix and full mean rewards differ by at most $2Uh_b$, and their gradients by $2Gh_b$. The $\beta$-Lipschitz loss therefore yields
\begin{equation}
|F_P(\vtheta)-F(\vtheta)|\leq2\beta U(1-\rho_{\min}),
\label{eq:prefix-bias}
\end{equation}
where $F_P=\E[\Ls_{\mathrm{pref}}]$. Consequently, $F(\vtheta_P^*)-F(\vtheta^*)\leq4\beta U(1-\rho_{\min})$. Unlike \cref{eq:position-optimality}, this bound has no response-length decay. At the tested $\rho_{\min}=1/2$, the optimality bound equals the full loss range over $|\Delta|\leq2U$, making it vacuous. This comparison concerns theoretical bounds, not measured deviations. \appref{app:proof-prefix-bias} also derives gradient and displacement bounds.

\subsection{Stochastic Branch Pruning}
\label{sec:method-branch}
Branch pruning removes one backward graph while preserving the expected gradient conditional on the sampled positions and prefixes. With $\lambda=\beta\sigmoid(-\beta\Delta)$ and $\sigmoid(z)=(1+e^{-z})^{-1}$, the DPO gradient decomposes as \citep{rafailov2023direct}
\begin{equation}
\nabla_{\vtheta}\Ls_{\dpo}
=-\lambda\,\nabla_{\vtheta}R_w+\lambda\,\nabla_{\vtheta}R_l.
\label{eq:branch-decomposition}
\end{equation}
Independently of positions and prefixes, draw the active backward-branch set and its probability, $(\sB,q^\star)\sim\branchsample(q_w)$, with $q_l=1-q_w$ and
\begin{equation}
\Pr\bigl[(\sB,q^\star)=(\{b\},q_b)\bigr]=q_b,
\; b\in\{w,l\}.
\label{eq:branch-sampling}
\end{equation}
Only $b^\star\in\sB$ retains a policy backward graph. The opposite side runs under \texttt{no\_grad}, providing the other reward needed for $\lambda$. Disabling branch pruning sets $\sB=\{w,l\}$ and $q^\star=1$. Inverse-probability weighting corrects each selected contribution. With $\softplus(z)=\log(1+e^z)$, use
\begin{equation}
\begin{aligned}
\widehat\Ls_w&=\frac1{q_w}\,\softplus\bigl(-\beta\bigl(R_w-\sg(R_l)\bigr)\bigr),\\
\widehat\Ls_l&=\frac1{q_l}\,\softplus\bigl(-\beta\bigl(\sg(R_w)-R_l\bigr)\bigr).
\end{aligned}
\label{eq:branch-surrogate}
\end{equation}
The sampled loss $\widehat\Ls=\widehat\Ls_{b^\star}$ preserves the numerical margin and differentiates only the selected reward.

\begin{proposition}[Conditional unbiasedness]
\label{prop:branch-unbiased}
Fix the sampled response positions and prefixes. Then
$\E_{b^\star}\bigl[\nabla_{\vtheta}\widehat\Ls\bigr]=\nabla_{\vtheta}\log\bigl(1+e^{-\beta(R_w-R_l)}\bigr)$.
With position and sequence pruning disabled, this is the full-response per-pair gradient in \cref{eq:dpo-loss}.
\end{proposition}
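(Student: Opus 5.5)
The plan is a direct computation. Conditioning on the sampled positions and prefixes fixes $R_w$ and $R_l$ as deterministic functions of $\vtheta$, and the only randomness left is $b^\star$. We first compute the gradient of each surrogate in \cref{eq:branch-surrogate}. Then we take the two-point expectation and compare the result with \cref{eq:branch-decomposition}.

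\textbf{Step 1: gradient of each surrogate.} Write $f(z)=\softplus(-\beta z)$, so $f'(z)=-\beta\sigmoid(-\beta z)$. The stop-gradient operator $\sg$ preserves values, so the argument of $\widehat\Ls_w$ is numerically the full margin $\Delta=R_w-R_l$. It blocks the derivative of $R_l$, so the chain rule gives
\[
\nabla_{\vtheta}\widehat\Ls_w=\frac{1}{q_w}f'(\Delta)\,\nabla_{\vtheta}R_w=-\frac{\lambda}{q_w}\nabla_{\vtheta}R_w,
\qquad
\nabla_{\vtheta}\widehat\Ls_l=\frac{1}{q_l}f'(\Delta)\,\bigl(-\nabla_{\vtheta}R_l\bigr)=\frac{\lambda}{q_l}\nabla_{\vtheta}R_l,
\]
where $\lambda=\beta\sigmoid(-\beta\Delta)$. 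Both surrogates use the same $\lambda$, because the numerical margin is identical on the two branches.

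\textbf{Step 2: expectation over the branch.} By \cref{eq:branch-sampling}, $b^\star$ is drawn independently of positions and prefixes. Its law is parameter-independent, and $q_w,q_l>0$. Hence
\[
\E_{b^\star}\bigl[\nabla_{\vtheta}\widehat\Ls\bigr]=q_w\nabla_{\vtheta}\widehat\Ls_w+q_l\nabla_{\vtheta}\widehat\Ls_l=-\lambda\nabla_{\vtheta}R_w+\lambda\nabla_{\vtheta}R_l.
\]
By \cref{eq:branch-decomposition}, evaluated at the sampled rewards, the right-hand side is $\nabla_{\vtheta}\log(1+e^{-\beta(R_w-R_l)})$. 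The decomposition holds for any differentiable $R_w,R_l$, including the masked rewards \cref{eq:masked-reward} and the prefix rewards.

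\textbf{Step 3: specialization.} With position and sequence pruning disabled, we have $\sS_b=\sI_{T_b}$ and $M_b=T_b$. Then $R_b$ is the full mean reward, and the identity becomes the per-pair gradient of \cref{eq:dpo-loss}.

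The only delicate point is justifying the semantics of $\sg$. The stopped rewards must enter the forward value exactly, so that $\lambda$ is shared between the branches. The no-grad side must contribute no derivative. We also need that exchanging expectation and gradient is trivial, since the expectation is a finite two-term sum with $\vtheta$-independent weights. Once these are stated, the remaining steps are routine.
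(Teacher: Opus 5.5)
Your proposal is correct and follows the paper's proof essentially step for step. You compute each stop-gradient surrogate's gradient as $-(\lambda/q_w)\nabla_{\vtheta}R_w$ and $+(\lambda/q_l)\nabla_{\vtheta}R_l$ with the shared coefficient $\lambda=\beta\sigmoid(-\beta\Delta)$, average over the two-point branch law so the $1/q_b$ factors cancel, and identify the result with \cref{eq:branch-decomposition} at the sampled rewards; your remark about exchanging expectation and gradient is harmless but not actually needed, since the claim concerns $\E_{b^\star}[\nabla_{\vtheta}\widehat\Ls]$ directly.
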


We default to $q_w=q_l=0.5$ and sweep $q_w\in\{0.3,0.5,0.7\}$ in \cref{sec:sensitivity}. Gradient norms govern importance-sampling variance \citep{katharopoulos2018importance}, as specialized in \appref{app:proof-branch}.

\begin{proposition}[Branch estimator variance]
\label{prop:branch-variance}
Fix positions and prefixes, and set $\vc_w=-\lambda\nabla_{\vtheta}R_w$, $\vc_l=\lambda\nabla_{\vtheta}R_l$. The estimator $\hat\vg=\vc_{b^\star}/q_{b^\star}$ has mean $\vc_w+\vc_l$ and variance
\begin{equation}
\Var(\hat\vg)=\frac{\|\vc_w\|^2}{q_w}+\frac{\|\vc_l\|^2}{q_l}-\|\vc_w+\vc_l\|^2 .
\label{eq:branch-variance}
\end{equation}
\end{proposition}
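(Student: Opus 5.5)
The estimator $\hat\vg$ is a random vector taking two values, so the plan is to compute both moments directly from the branch law \cref{eq:branch-sampling}. Throughout, positions and prefixes are held fixed, so $R_w$, $R_l$, $\lambda$, $\vc_w$ and $\vc_l$ are deterministic. The only randomness is the branch draw $b^\star$, which \cref{eq:branch-sampling} makes independent of the positions and prefixes. Since $\hat\vg$ is vector-valued, $\Var(\hat\vg)$ means the total variance $\E\|\hat\vg-\E\hat\vg\|^2$, the trace of the covariance.

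\textbf{Mean.} The estimator equals $\vc_w/q_w$ with probability $q_w$ and $\vc_l/q_l$ with probability $q_l$. Hence
\[
\E[\hat\vg]=q_w\frac{\vc_w}{q_w}+q_l\frac{\vc_l}{q_l}=\vc_w+\vc_l .
\]
By \cref{eq:branch-decomposition} this sum is the per-pair DPO gradient. I will also check that $\vc_{b^\star}/q_{b^\star}$ is exactly the gradient of the surrogate $\widehat\Ls_{b^\star}$ in \cref{eq:branch-surrogate}. Differentiate $\softplus(-\beta(R_w-\sg(R_l)))$ with respect to $\vtheta$; because $\sg$ preserves values, $\sigmoid(-\beta\Delta)$ is evaluated at the true margin. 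This gives
\[
-\beta\sigmoid(-\beta\Delta)\nabla_{\vtheta}R_w=-\lambda\nabla_{\vtheta}R_w=\vc_w ,
\]
and the rejected side is symmetric. This step links the statement to \cref{prop:branch-unbiased}.

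\textbf{Variance.} I use $\Var(\hat\vg)=\E\|\hat\vg\|^2-\|\E\hat\vg\|^2$. The second moment is
\[
\E\|\hat\vg\|^2=q_w\frac{\|\vc_w\|^2}{q_w^2}+q_l\frac{\|\vc_l\|^2}{q_l^2}=\frac{\|\vc_w\|^2}{q_w}+\frac{\|\vc_l\|^2}{q_l}.
\]
Subtracting $\|\vc_w+\vc_l\|^2$ gives \cref{eq:branch-variance}.

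\textbf{Main obstacle.} None of the steps is hard. The only points needing care are the convention that $\Var$ is the trace, and confirming that the stop-gradient surrogate reproduces $\vc_{b^\star}/q_{b^\star}$ with $\lambda$ evaluated at the unmodified margin. As an optional sanity check, substituting $q_l=1-q_w$ should reduce the variance to
\[
\frac{\|q_l\vc_w-q_w\vc_l\|^2}{q_wq_l}\geq0 .
\]
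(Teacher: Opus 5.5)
Your proposal is correct and follows the paper's proof: the mean comes from the two-point branch law (the paper cites \cref{prop:branch-unbiased} for it), the second moment is $q_w\|\vc_w\|^2/q_w^2+q_l\|\vc_l\|^2/q_l^2$, and the variance, understood as $\E\|\hat\vg\|^2-\|\E\hat\vg\|^2$, follows by subtraction. Your check that the stop-gradient surrogate reproduces $\vc_{b^\star}/q_{b^\star}$ and your rewriting $\|q_l\vc_w-q_w\vc_l\|^2/(q_wq_l)$ are both accurate extras that confirm the identity and its nonnegativity.
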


At $q_w=q_l=1/2$, the variance is $\|\vc_w-\vc_l\|^2\leq4\beta^2G^2$ under the bounds of \cref{sec:method-position}. Averaging $m$ estimators independent in preference pairs, positions, prefixes, and branches divides the variance by $m$. Re-drawing only the branch reduces only its variance component.

Branch pruning adds gradient noise without objective bias. For the optimized objective $\Phi$, assume $\mu$-strong convexity and $L$-smoothness on a closed convex region $\Theta$ with minimizer $\vtheta_\Phi^*$ in its interior. Starting from $\vtheta_0\in\Theta$, projected SGD with $m$ independent estimators per step and the schedule in \appref{app:proof-branch-rate} gives $\E\|\vtheta_N-\vtheta_\Phi^*\|^2=\mathcal{O}(1/N)$. That appendix supplies the constants and a high-probability bound. Gradient unbiasedness does not imply unbiased finite-step parameters.

\subsection{Composition}
\label{sec:composition}
\Cref{alg:stochastic-dpo} composes the interventions in execution order: truncate the response, select scoring positions, then select the backward branch and apply its probability correction. Each mode is optional. Full disables all three, using full responses, all response scores, and both backward branches. Tables abbreviate the modes as Position, Sequence, and Branch, with ``+'' denoting composition.
\begin{algorithm}[ht]
\caption{Stochastic DPO step}
\label{alg:stochastic-dpo}
\small
\begin{algorithmic}[1]
\algtext*{EndIf}
\algtext*{EndFor}
\Require Pair $(\vx,\vy_w,\vy_l)$, policy $\pol$, frozen $\refpol$
\Require Rates $\alpha,\rho_{\min},q_w$, enabled pruning modes
\For{$b\in\{w,l\}$}
  \State $(\tilde\vy_b,M_b)\gets(\vy_b,T_b)$
  \If{sequence pruning}
    \State $(\tilde\vy_b,M_b)\sim\prefixsample(\vy_b,\rho_{\min})$ \Comment{\cref{eq:prefix-sampling}}
  \EndIf
  \State $\sS_b\gets\sI_{M_b}$
  \If{position pruning}
    \State $\sS_b\sim\positionsample(M_b,\alpha)$ \Comment{\cref{eq:position-sampling}}
  \EndIf
\EndFor
\State $\sB\gets\{w,l\}$
\State $q^\star\gets1$
\If{branch pruning}
  \State $(\sB,q^\star)\sim\branchsample(q_w)$ \Comment{\cref{eq:branch-sampling}}
\EndIf
\For{$b\in\{w,l\}$}
  \State $\vz_b\gets\logprobs(\pol,\vx,\tilde\vy_b,\sS_b,b\in\sB)$
  \State $\vz_b^{\mathrm{ref}}\gets\logprobs(\refpol,\vx,\tilde\vy_b,\sS_b,\texttt{false})$
  \State $R_b\gets\mean(\vz_b-\vz_b^{\mathrm{ref}})$ \Comment{\cref{eq:masked-reward}}
\EndFor
\State $\Ls\gets\ell(R_w-R_l)/q^\star$ \Comment{\cref{eq:branch-surrogate}}
\State $\Ls.\operatorname{backward}()$
\State Update $\vtheta$
\end{algorithmic}
\end{algorithm}

For a policy or reference model $\pi$, the scored log-probability vector is
\begin{equation}
\begin{aligned}
\logprobs(\pi,\vx,\vy,\sS,g)
&=\\[-2pt]
&\bigl(\log\pi(\evy_t\mid\vx,\vy_{<t})\bigr)_{t\in\sS}.
\end{aligned}
\label{eq:scored-logprobs}
\end{equation}
Both models score the same prefixes and positions, with entries in response order. Position pruning applies the head only at $\sS$; otherwise, the full head is evaluated before gathering. The argument $g$ controls gradient recording through the backbone and head: only policy branches in $\sB$ retain a backward graph, while the reference always runs without gradients. Numerical values are unchanged. The outputs $\vz_b,\vz_b^{\mathrm{ref}}$ give $R_b$ by the mean difference in \cref{eq:masked-reward}. The step loss is $\Ls=\ell(R_w-R_l)/q^\star$ (\cref{eq:branch-surrogate}), recovering the per-pair loss in \cref{eq:dpo-loss} on the selected prefixes and positions when $q^\star=1$.

The minimizer $\vtheta_{AP}^*$ of the composed objective $F_{AP}$ satisfies
\begin{equation}
\begin{aligned}
F(\vtheta_{AP}^*)-F(\vtheta^*)&\leq\frac{\beta^2\bar v(1-\alpha)}{4\alpha(M_0-1)}\\
&\quad+4\beta U(1-\rho_{\min}),
\end{aligned}
\label{eq:composed-loss}
\end{equation}
where $M_0\!\!=\!\!\max(2,\ceil{\rho_{\min}T_0})$.
The position-pruning bias bound decays as $1/T_0$ at fixed retention. The prefix bound scales with $1-\rho_{\min}$ without length decay, while branch pruning adds no objective bias and has the $\mathcal{O}(1/N)$ mean-squared optimization error above. \appref{app:proof-composed} proves the composed loss bound and \cref{eq:composed-bias,eq:error-budget}, which allocate a total tolerance across $\alpha$, $\rho_{\min}$, and $N$.

\FloatBarrier
\section{Experiments}
\label{sec:experiments}
\subsection{Experimental Setup}
\label{sec:setup}
We use Qwen3-4B, Qwen3-8B \citep{qwen3}, and Mistral-7B-Instruct-v0.3 \citep{jiang2023mistral7b}. The core resource profiles, eight-configuration factorial, and parameter sweeps use Qwen3-4B on HelpSteer3-Preference \citep{helpsteer3preference}. LoRA \citep{hu2021loralowrankadaptationlarge} has rank 64, scaling 128, and dropout $0.05$. Training uses FlashAttention-2 \citep{dao2023flashattention2fasterattentionbetter}, gradient checkpointing \citep{chen2016trainingdeepnetssublinear}, AdamW \citep{loshchilov2019decoupledweightdecayregularization}, DPO $\beta=1$. Global batch is 64 with per-device batch 2, accumulation 4, and eight Ascend 910B accelerators with 32\,GiB each. Optimizer and corpus details are in \appref{app:hs-recipe}; matched token-selection controls are in \cref{tab:hs-full,tab:hs-pairs}.
Defaults are $\alpha=0.6$, $\rho_b\sim\mathcal U(0.5,1)$, and $q_w=q_l=0.5$. The factorial and sweeps use training and selection seed 42. A second Qwen seed (43) and Mistral seed 42 test the Full--Branch comparison.

\paragraph{Metrics.}
We group benchmarks by evaluation protocol. Accuracy-based benchmarks comprise ARC-Challenge \citep{clark2018think}, GSM8K \citep{cobbe2021training}, TruthfulQA \citep{lin2022truthfulqa}, BBH \citep{suzgun2022challenging}, MATH-Hard \citep{hendrycks2021math}, MMLU-Pro \citep{wang2024mmlupro}, and MuSR \citep{sprague2024mus}. For open-ended generation, MiniMax-M3 \citep{minimax-m3} judges MT-Bench (MT, $0$--$10$) \citep{zheng2023judgingllmasajudgemtbenchchatbot}, Arena-Hard (AH, tie-adjusted win rate in \%) \citep{li2024crowdsourceddatahighqualitybenchmarks}, and WildBench (WB, $1$--$10$) \citep{lin2024wildbenchbenchmarkingllmschallenging}. Mean Accuracy averages the seven accuracy-based benchmark percentages. Mean Generation Score equally averages $10\,\mathrm{MT}$, $\mathrm{AH}$, and $10\,\mathrm{WB}$ on a 100-point scale. Task-level means are retained in \cref{tab:factorial-tasks,tab:sweep-full,tab:supplement-tasks}. 

\subsection{Resource Effects and Attribution}
\label{sec:efficiency}
\begin{figure*}[t]
\centering
\includegraphics[width=0.90\textwidth]{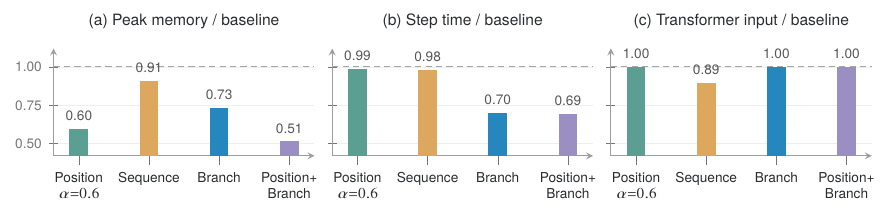}
\caption{Resource ratios at the default settings. Dashed lines mark each baseline. Panel (c) counts executed input tokens, including prompt and padding.}
\label{fig:profile}
\end{figure*}

\paragraph{Baselines and measurement.}
Full denotes the unpruned DPO baseline using the original padded stack. Full-concat batches chosen and rejected responses together to reduce padding; both retain all response supervision and both backward graphs. Position, Branch, and Position+Branch profiles use Full-concat; Sequence and end-to-end trainings use Full. The concatenated profiler measures Full-concat at $17.343$\,s and $23{,}627$\,MB, vs. Full at $18.204$\,s and $18{,}571$\,MB: $1.049\times$ faster with $27\%$ more memory. The separate prefix profiler measures Full at $18.826$\,s. Timings are compared within a profiler. The concatenated profile uses 16 paired measured rounds. Peak high-bandwidth memory (HBM) is reset per step and maximized over 8 ranks.

\paragraph{Output head versus supervision.}
Position combines selected-head execution with random position pruning. In the separate attribution profile (\cref{tab:a-attribution}), retaining all supervision already saves $8{,}905$\,MB; sampling adds only $642$\,MB. Thus selected-head execution accounts for $93.3\%$ of the saving, vs. $6.7\%$ from supervision deletion. Full-retention rewards match Full-concat exactly, and selected-head and dense-reduced rewards match under identical masks (\appref{app:supplement}). The implementation supplies most memory savings, while deletion introduces the loss gap of \cref{prop:random-loss}.

\begin{table}[ht]
\centering
\small
\setlength{\tabcolsep}{4pt}
\begin{tabular}{lccc}
\toprule
\rowcolor{tableHead}
Configuration & Retention & Step (s) & HBM (MB)\\
\midrule
\rowcolor{tableBase}
Full-concat & 1.0 & 17.081 & 23{,}627\\
\rowcolor{tableFocus}
Selected head & 1.0 & 16.827 & 14{,}722\\
Selected head & 0.6 & 16.791 & 14{,}080\\
\bottomrule
\end{tabular}
\caption{Qwen output-head attribution in a separate matched profile. Retention $1.0$ preserves all response supervision. Shading highlights full-supervision selected-head execution. Details: \appref{app:supplement}.}
\label{tab:a-attribution}
\end{table}

\vspace{-3mm}
\paragraph{Backward computation.}
Against Full-concat, Branch reduces step time by $29.9\%$ ($1.426\times$ speedup) and peak HBM by $26.7\%$. Position+Branch reaches $1.443\times$ speedup and $48.6\%$ lower HBM (\cref{fig:profile}). Its $11{,}471$\,MB saving is below the $15{,}855$\,MB sum of the separate savings; the cause of this sub-additivity remains unresolved.

\paragraph{Sequence execution.}
Sequence pruning executes $0.85$--$0.94$ of Full's input tokens and saves $8$--$11\%$ peak HBM (\cref{tab:b-profile}). Across random and fixed cutoffs, steps take $18.430$--$18.482$\,s, at most $2.1\%$ below Full. Differences between cutoffs are smaller than the $0.18$--$0.44$\,s spread across repeated measurements of one configuration. Token reduction therefore has a much smaller timing effect than branch pruning at this geometry; its physical cause remains unresolved.

\begin{table}[ht]
\centering
\small
\setlength{\tabcolsep}{2.8pt}
\begin{tabular}{lrrrr}
\toprule
\rowcolor{tableHead}
Sequence setting & Tokens & HBM (MB) & $\Delta$HBM & Step (s)\\
\midrule
\rowcolor{tableBase}
Full & 1.000$\times$ & 18{,}571 & --- & 18.826\\
$\mathcal U(0.25,1)$ & 0.854$\times$ & 16{,}718 & $-10.0\%$ & 18.480\\
\rowcolor{tableFocus}
$\mathcal U(0.5,1)$ & 0.893$\times$ & 16{,}827 & $-9.4\%$ & 18.430\\
$\mathcal U(0.75,1)$ & 0.943$\times$ & 17{,}072 & $-8.1\%$ & 18.482\\
fixed $0.75$ & 0.894$\times$ & 16{,}561 & $-10.8\%$ & 18.471\\
\bottomrule
\end{tabular}
\caption{Sequence profile against Full. Tokens counts executed input, including prompt and padding, relative to Full. Shading marks the default $\mathcal U(0.5,1)$.}
\label{tab:b-profile}
\end{table}

\subsection{Parameter Sensitivity}
\label{sec:sensitivity}

\Cref{tab:param-sweep} varies one axis at a time using the factorial recipe. Realized pruning fractions are reported in \appref{app:hs-factorial}. Sequence input counts include prompt and padding, so fractions are not comparable across axes or directly to $\rho_b$.
\begin{table}[!ht]
\centering
\small
\setlength{\tabcolsep}{1.8pt}
\begin{tabular}{llrrr}
\toprule
\rowcolor{tableHead}
Axis & Setting & Resource & {\scriptsize\shortstack{Mean\\accuracy (\%)}} & {\scriptsize\shortstack{Mean generation\\score}}\\
\midrule
Position & $\alpha{=}0.30$ & 11{,}211 MB & 57.90 & 47.73\\
\rowcolor{tableFocus}
Position & $\alpha{=}0.60$ & 11{,}320 MB & 58.06 & 48.38\\
Position & $\alpha{=}0.90$ & 11{,}580 MB & 58.32 & 48.63\\
\midrule
Sequence & $\mathcal U(0.25,1)$ & 16{,}718 MB & 58.09 & 47.91\\
\rowcolor{tableFocus}
Sequence & $\mathcal U(0.50,1)$ & 16{,}827 MB & 58.04 & 47.33\\
Sequence & $\mathcal U(0.75,1)$ & 17{,}072 MB & 58.04 & 49.53\\
Sequence & fixed $0.75$ & 16{,}561 MB & 58.08 & 47.03\\
\midrule
Branch & $q_w{=}0.30$ & 127.7 min & 58.06 & 48.80\\
\rowcolor{tableFocus}
Branch & $q_w{=}0.50$ & 128.4 min & 58.08 & 50.25\\
Branch & $q_w{=}0.70$ & 128.4 min & 57.97 & 48.74\\
\bottomrule
\end{tabular}
\caption{Ablation on Qwen3-4B/HelpSteer3-Preference. Shading marks defaults. Means follow \cref{sec:setup}. Full task scores: \cref{tab:sweep-full}.}
\label{tab:param-sweep}
\end{table}

For Position, increasing $\alpha$ from $0.3$ to $0.9$ raises mean accuracy from $57.90$ to $58.32$, while peak HBM changes little ($11{,}211$--$11{,}580$\,MB) and AH is non-monotone (\cref{tab:sweep-full}). This weak memory dependence agrees with the implementation-dominated attribution above. Sequence cutoffs track input work and HBM but barely affect step time. Fixed $\rho_b=0.75$ and $\mathcal U(0.5,1)$ execute similar token fractions yet differ in temporal weights and randomness, so their score difference cannot isolate the Jensen gap. Branch probabilities yield similar training times ($127.7$--$128.4$\,min) and varying open-ended point estimates. We retain $q_w=0.5$. HBM was not measured across $q_w$. Position step times are $3.29$--$3.41$\,s in its separate paired, interleaved retention profiler; resource comparisons remain within each axis.

\subsection{The Eight-Configuration Factorial}
\label{sec:quality}

In \cref{tab:factorial}, all four Branch configurations finish in $126$--$128$\,min vs. $185$--$188$\,min without Branch, preserving a $31$--$32\%$ time saving across matched contrasts. Position and Sequence retain that benefit when composed with Branch. All 12 single-axis contrasts lower mean accuracy (unrounded contrasts in \appref{app:hs-factorial}); \cref{fig:pareto} shows the resulting time--accuracy tradeoff.
\begin{table}[ht]
\centering
\small
\setlength{\tabcolsep}{3pt}
\begin{tabular}{lrrr}
\toprule
\rowcolor{tableHead}
Method & {\scriptsize\shortstack{Wall\\(min)}} & {\scriptsize\shortstack{Mean\\accuracy (\%)}} & {\scriptsize\shortstack{Mean generation\\score}}\\
\midrule
\rowcolor{tableBase}
Full & 187.8 & \textbf{58.33} & \underline{49.33}\\
Position & 185.9 & 58.06 & 48.38\\
Sequence & 186.4 & 58.04 & 47.33\\
\rowcolor{tableFocus}
Branch & 128.4 & \underline{58.08} & \textbf{50.25}\\
Position+Sequence & 185.0 & 57.97 & 48.85\\
Position+Branch & 127.8 & 57.94 & 48.17\\
Sequence+Branch & \textbf{126.3} & 57.86 & 47.67\\
\shortstack[l]{Position+Sequence\\+Branch} & \underline{127.0} & 57.69 & 48.35\\
\bottomrule
\end{tabular}
\caption{Qwen3-4B factorial, one training and one selection seed. Bold/underline mark best/second-best values, including ties. Full scores: \cref{tab:factorial-tasks}.}
\label{tab:factorial}
\end{table}

Open-ended scores move differently. Branch raises MT and WB point estimates with AH close to Full. Configurations containing Position or Sequence fall below Full on AH, while their MT changes depend on the base configuration. Generation means are descriptive single-seed summaries and do not establish score superiority.

\begin{figure}[ht]
\centering
\includegraphics[width=0.95\columnwidth]{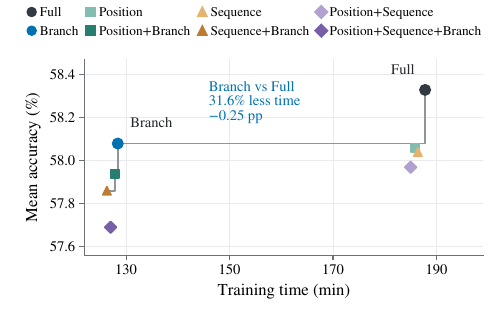}
\caption{Training time versus seven-task mean accuracy for the eight factorial runs. The gray staircase is the observed frontier of single-seed point estimates.}
\label{fig:pareto}
\end{figure}

\subsection{Branch Pruning Across Seeds and Models}
\label{sec:c-validation}

\begin{table}[ht]
\centering
\small
\setlength{\tabcolsep}{3pt}
\begin{tabular}{@{}lrrcc@{}}
\toprule
\rowcolor{tableHead}
Config. & \shortstack{Wall (min)} & Speedup & \shortstack{Mean acc.\\(\%)} & \shortstack{Mean gen.\\score}\\
\midrule
\multicolumn{5}{@{}l}{Qwen3-4B, seed 42}\\
\rowcolor{tableBase}
Full & 187.8 & $1.000\times$ & 58.33 & 49.33\\
\rowcolor{tableFocus}
Branch & \textbf{128.4} & $\boldsymbol{1.463\times}$ & 58.08 & \textbf{50.25}\\
\midrule
\multicolumn{5}{@{}l}{Qwen3-4B, seed 43}\\
\rowcolor{tableBase}
Full & 187.0 & $1.000\times$ & 58.26 & 47.64\\
\rowcolor{tableFocus}
Branch & \textbf{131.0} & $\boldsymbol{1.428\times}$ & 57.81 & \textbf{48.49}\\
\midrule
\multicolumn{5}{@{}l}{Mistral-7B, seed 42}\\
\rowcolor{tableBase}
Full & 211.0 & $1.000\times$ & 44.86 & 33.17\\
\rowcolor{tableFocus}
Branch & \textbf{142.0} & $\boldsymbol{1.486\times}$ & 44.55 & \textbf{33.93}\\
\bottomrule
\end{tabular}
\caption{Full--Branch validation. Speedup uses each pair's Full training time. Bold marks the better value within each pair. Score means follow \cref{sec:setup}; task scores are in \cref{tab:supplement-tasks}.}
\label{tab:c-validation}
\end{table}

The matched Full--Branch trainings in \cref{tab:c-validation} save $29.9$--$32.7\%$ wall-clock across two Qwen seeds and Mistral, with mean accuracy lower by $0.25$--$0.45$\,pp. In the Mistral profile in \cref{tab:branch-profile}, Branch gives $1.507\times$ speedup and saves $1{,}347$\,MB ($6.4\%$) against Full. The time benefit transfers across families.

\begin{table}[ht]
\centering
\small
\setlength{\tabcolsep}{4pt}
\begin{tabular}{lrr}
\toprule
\rowcolor{tableHead}
Configuration & Step (s) & HBM (MB)\\
\midrule
\multicolumn{3}{l}{Qwen3-4B}\\
\rowcolor{tableBase}
Full & 18.204 & 18{,}571\\
\rowcolor{tableFocus}
Branch & \textbf{12.161} & \textbf{17{,}319}\\
\midrule
\multicolumn{3}{l}{Mistral-7B}\\
\rowcolor{tableBase}
Full & 22.438 & 21{,}185\\
\rowcolor{tableFocus}
Branch & \textbf{14.888} & \textbf{19{,}838}\\
\bottomrule
\end{tabular}
\caption{Paired Full--Branch step-time and peak-memory profiles. Bold marks lower values within each model.}
\label{tab:branch-profile}
\end{table}

MT and WB checkpoint means rise in all three comparisons, while Mistral AH falls. These checkpoint scores are descriptive and do not establish superiority or characterize training-seed variability. Full task scores and scoring coverage are in \appref{app:supplement}. Cross-family validation covers Branch only; Position, Sequence, and their compositions are evaluated on Qwen.

\paragraph{Branch-gradient consistency.}
We check the conditional expectation in \cref{prop:branch-unbiased} by enumerating both backward branches. On a small Mistral model in Float64, the probability-weighted average of both branch gradients matches Full to relative L2 error at most $1.7\times10^{-7}$. On Mistral-7B in BFloat16, the gradient cosine is $0.9997$ and the relative L2 error is $0.0236$, against $0.0071$ for repeated computation of the same configuration, which sets the numerical noise floor for this check. These checks disable LoRA dropout and retain gradient checkpointing using dropout $0.05$.

\FloatBarrier
\section{Conclusion}
\label{sec:conclusion}
We separate output scoring, sequence execution, and backward computation to locate where stochastic DPO computation saves resources and how it changes optimization. Most output-memory savings come from the implementation rather than from supervision deletion, the branch-pruning time benefit reproduces across Qwen seeds and Mistral, and both accompany lower mean accuracy with mixed open-ended results, so configuration choice should follow the resource constraint and the acceptable objective change.

\section*{Limitations}
Evaluation covers parameter-efficient text preference training with an online reference on one eight-accelerator setup. Savings may vary with hardware, batch geometry, and reference execution. 

\bibliography{confbib,resubmission_refs}
\appendix


\section{Notation}
\label{app:notation}
We group the main notation below. Symbols specific to individual proofs are defined where introduced. In single-response results, the side index $b$ is suppressed, so $T,k,R,u_t$ denote the corresponding side quantities.

\paragraph{Preference pairs and loss.}
$\train$ is the preference dataset of prompts $\vx$, chosen responses $\vy_w$, and rejected responses $\vy_l$. The policy $\pol$ has trainable parameters $\vtheta$, and $\refpol$ is the fixed reference. For side $b\in\{w,l\}$, $T_b$ is the response length, $\evy_{b,t}$ the target token at position $t$, and $\vy_{b,<t}$ its response prefix. The token log-ratio $u_{b,t}$ gives the mean reward $R_b$ and margin $\Delta=R_w-R_l$. The scale $\beta$ defines $\ell(d)=\log(1+e^{-\beta d})$, with per-pair loss $\Ls_{\dpo}=\ell(\Delta)$ (\cref{sec:problem}).

\paragraph{Three pruning axes.}
Sequence pruning uses cutoff fraction $\rho_b$ to retain prefix $\tilde\vy_b$ of length $M_b$, with minimum fraction $\rho_{\min}$. Position pruning selects $\sS_b\subseteq\sI_{M_b}=\{1,\ldots,M_b\}$ with count $k_b$ controlled by retention $\alpha$. Branch pruning samples backward branch $b^\star$ with probability $q_b$ (\cref{sec:method-position,sec:method-prefix,sec:method-branch}).

\paragraph{Sampling and execution.}
$\prefixsample$, $\positionsample$, and $\branchsample$ denote the prefix, position, and branch sampling laws. The active backward branches form $\sB$, and $q^\star$ supplies the probability correction. The operator $\logprobs$ returns scored log-probabilities, with Boolean flag $g$ controlling gradient recording. Its policy and reference outputs are $\vz_b$ and $\vz_b^{\mathrm{ref}}$, and $\Ls$ is the step loss. We write $\sigmoid$ for sigmoid, $\sg$ for stop-gradient, and $V$ for vocabulary size (\cref{sec:problem,sec:method-branch,sec:composition}).

\paragraph{Analysis.}
$\E$ and $\Var$ refer to the indicated sampling. $\sU_k$ is a uniform size-$k$ position subset, $\vphi_t$ a token log-probability gradient, and $\Delta_U$ and $\Delta_{\mathrm{full}}$ the sampled and full margins (\cref{prop:random-loss}). The objectives $F,F_A,F_P,F_{AP}$ correspond to full responses, position pruning, sequence pruning, and their composition, with minimizers $\vtheta^*,\vtheta_A^*,\vtheta_P^*,\vtheta_{AP}^*$. The analysis uses token-score and gradient bounds $U,G$, variance bound $\bar v$, and length bounds $T_0,M_0$ (\cref{sec:method-position,sec:composition}).

\section{Proofs}
\label{app:proofs}
This appendix collects the proofs of every stated result: Proposition~\ref{prop:random-loss} (\cref{app:prop-random}), Lemma~\ref{lem:temporal-weight} (\cref{app:proof-temporal}), Lemma~\ref{lem:prefix-gap} (\cref{app:proof-prefix}), Proposition~\ref{prop:branch-unbiased} (\cref{app:proof-branch}).

\subsection{Proof of Proposition~\ref{prop:random-loss}}
\label{app:prop-random}
\begin{proof}[Proof of \cref{prop:random-loss}]
Writing $\mathbb{1}\{E\}$ for the indicator of event $E$, \cref{eq:random-unbiased} follows from $\E[\mathbb{1}\{t\in\sU_k\}]=k/T$ and sampling without replacement. Jensen's inequality gives the lower bound in \cref{eq:random-loss-gap}. Since $0\leq\ell''(d)\leq\beta^2/4$, a second-order upper bound around $\Delta_{\mathrm{full}}$ gives the upper bound. The variance is the finite-population variance of a without-replacement sample, summed over independent sides.
\end{proof}

The finite-population identities behind the proposition are as follows. Let $I_t$ indicate membership in a uniform size-$k$ subset of $T$ positions. Then $\E I_t=k/T$, and for $i\ne j$, $\E[I_iI_j]=k(k-1)/[T(T-1)]$. These identities establish \cref{eq:random-unbiased} and the variance in \cref{prop:random-loss}. For vector gradients and $T>1$, let $\bar\vphi=T^{-1}\sum_t\vphi_t$ be their population mean and $\mSigma_\phi$ their population covariance. $\Cov$ denotes covariance. Then
\begin{align}
\Cov\left[\frac1k\sum_t I_t\vphi_t\right]
=\frac{T-k}{k(T-1)}\mSigma_\phi,\\
\mSigma_\phi=\frac1T\sum_t(\vphi_t-\bar\vphi)(\vphi_t-\bar\vphi)^\top,\nonumber
\end{align}
For $T=1$, the covariance is zero. These identities concern the side-reward gradient. The sampled loss gradient is $-\beta\sigmoid(-\beta\Delta_U)\nabla_{\vtheta}\Delta_U$. Its mask-dependent multiplier means that reward-gradient unbiasedness alone does not imply full-loss-gradient unbiasedness. \Cref{eq:random-loss-gap} quantifies the expected loss difference.

\paragraph{Remark (target-logit diagnostics).}
Let $p_t$ be the current-policy target-token probability at position $t$, and $\sS$ a selected set. The target-logit diagnostic $\energy_z$ (subscript $z$ denotes the target logit) is $\energy_z(\sS)=\frac1{|\sS|^2}\sum_{t\in\sS}(1-p_t)^2$ on a fixed response. For the full position set $\sI_T$, linearity gives
\begin{align}
\E[\energy_z(\sU_k)]&=\frac1{kT}\sum_t(1-p_t)^2,\\
\energy_z(\sI_T)&=\frac1{T^2}\sum_t(1-p_t)^2.\nonumber
\end{align}
Their ratio is $T/k$, not $k/T$. If a diagnostic instead reports the retained numerator fraction $f(\sS)=\sum_{t\in\sS}(1-p_t)^2/\sum_{t=1}^T(1-p_t)^2$, its normalized reward-derivative ratio is $(T/k)^2f(\sS)$ for that response. With varying response lengths and rounded counts, conversion must occur per response before aggregation. Neither quantity measures conditional KL or downstream quality.

\paragraph{Remark (alternative denominator).}
Keeping the $1/T_b$ denominator would scale the selected mean by the realized fraction $k_b/T_b$. At a common realized fraction $\gamma=k_b/T_b$ on both sides, this equals selected-mean normalization at temperature $\gamma\beta$. We use the $1/k_b$ mean throughout.

\subsection{Proof of Lemma~\ref{lem:temporal-weight}}
\label{app:proof-temporal}
\begin{proof}[Proof of \cref{lem:temporal-weight}]
Exchange the order of summation:
\begin{align}
\E_M[R_M]&=\sum_{m=1}^{T}q(m)\,\frac1m\sum_{t=1}^{m}u_t\nonumber\\
&=\sum_{t=1}^{T}u_t\sum_{m=t}^{T}\frac{q(m)}{m}
=\sum_{t=1}^{T}w_tu_t.
\end{align}
For $1\leq t<T$, $w_t-w_{t+1}=q(t)/t\geq0$, and $w_T=q(T)/T\geq0$. Also, $\sum_{t=1}^T w_t=\sum_{m=1}^T q(m)\sum_{t=1}^m 1/m=1$. Conversely, for a nonnegative, normalized, non-increasing vector $\vw=(w_1,\ldots,w_T)$, set $w_{T+1}:=0$ and $q(t)=t(w_t-w_{t+1})$. Then $q(t)\geq0$ and $\sum_{t=1}^T q(t)=\sum_{t=1}^T t(w_t-w_{t+1})=\sum_{t=1}^T w_t=1$. Substitution recovers $w_t=\sum_{m=t}^Tq(m)/m$. Uniform weights $w_t=1/T$ give $q(t)=0$ for $t<T$ and $q(T)=1$, so $M=T$ almost surely, recovering the fixed-cutoff case $\rho_b\equiv1$.
\end{proof}

\subsection{Proof of Lemma~\ref{lem:prefix-gap}}
\label{app:proof-prefix}
\begin{proof}[Proof of \cref{lem:prefix-gap}]
$\ell(d)=\log(1+e^{-\beta d})$ satisfies $\ell'(d)=-\beta\sigmoid(-\beta d)$ and $\ell''(d)=\beta^2\sigmoid(\beta d)\sigmoid(-\beta d)$. For $\beta>0$, $0<\ell''\leq\beta^2/4$ and $\ell'<0$, so $\ell$ is strictly convex and strictly decreasing. For $\beta=0$, $\ell\equiv\log2$ with $\ell'=\ell''=0$. Assume $\beta>0$ unless a degenerate case is named. Jensen's inequality gives $\E[\ell(\Delta_M)]\geq\ell(\mu_{\mathrm{pref}})$, hence $J_{\mathrm{pref}}:=\Ls_{\mathrm{pref}}-\ell(\mu_{\mathrm{pref}})\geq0$. For the upper bound, expand $\ell$ to second order around $\mu_{\mathrm{pref}}$: for each scalar margin $x$, $\ell(x)=\ell(\mu_{\mathrm{pref}})+\ell'(\mu_{\mathrm{pref}})(x-\mu_{\mathrm{pref}})+\tfrac12\ell''(\xi)(x-\mu_{\mathrm{pref}})^2$ for some $\xi$ between $x$ and $\mu_{\mathrm{pref}}$. Taking the expectation removes the linear term, and $\ell''\leq\beta^2/4$ gives
\begin{equation}
J_{\mathrm{pref}}=\tfrac12\,\E[\ell''(\xi)(\Delta_M-\mu_{\mathrm{pref}})^2]\leq\tfrac{\beta^2}{8}\Var(\Delta_M).
\end{equation}
Subtracting the full-response loss from \cref{eq:prefix-gap} gives
\begin{equation}
\begin{aligned}
\Ls_{\mathrm{pref}}-\ell(\Delta_{\mathrm{full}})
&=\ell(\mu_{\mathrm{pref}})-\ell(\Delta_{\mathrm{full}})\\
&\quad+J_{\mathrm{pref}}.
\end{aligned}
\label{eq:prefix-decomp}
\end{equation}
Because $\ell$ is strictly decreasing, $\ell(\mu_{\mathrm{pref}})-\ell(\Delta_{\mathrm{full}})$ is positive precisely when the temporally reweighted mean margin $\mu_{\mathrm{pref}}$ is below the full-response margin, which is a data- and distribution-dependent event and therefore carries no fixed sign. \Cref{eq:temporal-weight} gives $\mu_{\mathrm{pref}}=\sum_t w_t^{(w)}u_{w,t}-\sum_t w_t^{(l)}u_{l,t}$, with the side-specific weights $w_t^{(b)}$. Two degeneracies: a deterministic cutoff makes $\Delta_M$ constant, so $\Var(\Delta_M)=0$ and $J_{\mathrm{pref}}=0$. At $\beta=0$ both terms of \cref{eq:prefix-decomp} vanish.

Finally, the cutoff variables $M_w,M_l$ are drawn independently of $\vtheta$ and have finite support, so the expectation in $\Ls_{\mathrm{pref}}(\vtheta)=\E[\ell(\Delta_M(\vtheta))]$ is a finite sum and differentiation passes through it exactly:
\begin{equation}
\nabla_{\vtheta}\Ls_{\mathrm{pref}}=\E\bigl[\nabla_{\vtheta}\ell(\Delta_M)\bigr].
\end{equation}
Thus $\nabla_{\vtheta}\ell(\Delta_M)$ is an unbiased stochastic gradient of $\Ls_{\mathrm{pref}}$ and need not coincide in expectation with the full-response gradient. The temporal shift and Jensen gap in \cref{eq:prefix-decomp} can cancel at a fixed policy. Equality of loss values at one point does not establish equality of gradients.
\end{proof}

\subsection{Proof of Proposition~\ref{prop:branch-unbiased}}
\label{app:proof-branch}
\begin{proof}[Proof of \cref{prop:branch-unbiased}]
Write $\Ls_w^{\text{sg}}=\softplus(-\beta(R_w-\sg(R_l)))$. Because the stop-gradient blocks the flow into $R_l$, and because $\lambda=\beta\sigmoid(-\beta(R_w-R_l))$ is evaluated at the current parameters as a chain-rule coefficient (it is differentiated no further),
\begin{align}
\nabla_{\vtheta}\Ls_w^{\text{sg}}
&=\sigmoid\bigl(-\beta(R_w-R_l)\bigr)(-\beta)\nabla_{\vtheta}R_w\nonumber\\
&=-\lambda\nabla_{\vtheta}R_w,
\end{align}
and symmetrically $\nabla_{\vtheta}\Ls_l^{\text{sg}}=+\lambda\nabla_{\vtheta}R_l$. With the $1/q_b$ importance factors, $\E_{b^\star}[\nabla\widehat\Ls]=q_w\cdot(-\lambda/q_w)\nabla R_w+q_l\cdot(\lambda/q_l)\nabla R_l=-\lambda\nabla R_w+\lambda\nabla R_l=\nabla_{\vtheta}\log(1+e^{-\beta(R_w-R_l)})$. All quantities are those of the sampled response positions and prefixes: with sequence pruning active, $R_b$ is the prefix mean over $M_b$. With position pruning active, it is the masked mean over $\sS_b$, and the gradients flow only through the selected positions. When position and sequence pruning are disabled, the sampled computation is the full-response objective and the claim recovers the length-normalized loss in \cref{eq:dpo-loss}.
\end{proof}

\paragraph{Remark (branch-estimator variance).}
With $\|\cdot\|$ denoting the Euclidean norm, for fixed sampled rewards the single-branch estimator's second moment is $\lambda^2\bigl(\|\nabla R_w\|^2/q_w+\|\nabla R_l\|^2/q_l\bigr)$. Its variance-optimal probability therefore balances the branch gradient norms, the usual importance-sampling criterion \citep{katharopoulos2018importance}. We use $q_w=0.5$ as a symmetric default. The $q_w\in\{0.3,0.5,0.7\}$ sweep (\cref{tab:param-sweep}) has similar wall-clock times and differing quality point estimates.

\subsection{Proof of \texorpdfstring{\cref{cor:position-optimality}}{Corollary \ref{cor:position-optimality}} and the retention rule}
\label{app:proof-optimality}
\begin{proof}[Proof of \cref{cor:position-optimality}]
Since $\ell$ is convex, Jensen's inequality gives $\E_S[\ell(\Delta_U)]\geq\ell(\E_S[\Delta_U])=\ell(\Delta_{\mathrm{full}})$ at every $\vtheta$, so $F_A\geq F$ pointwise. Hence
\begin{equation}
\begin{aligned}
F(\vtheta_A^*)&\leq F_A(\vtheta_A^*)\leq F_A(\vtheta^*)\\
&\leq F(\vtheta^*)+\frac{\beta^2}{8}\E_{\train}\bigl[\Var_S(\Delta_U)\bigr],
\end{aligned}
\end{equation}
where the first step uses $F\leq F_A$, the second the optimality of $\vtheta_A^*$ for $F_A$, and the third \cref{eq:random-loss-gap} evaluated at $\vtheta^*$. Subtracting $F(\vtheta^*)$ gives \cref{eq:position-optimality}.

For \cref{eq:rA}, $k_b\geq\alpha T_b$ implies $1/k_b\leq1/(\alpha T_b)$ and $T_b-k_b\leq(1-\alpha)T_b$, while $v_b\leq\bar v$ and $T_b\geq T_0$ give
\begin{equation}
\begin{aligned}
\Var_S(\Delta_U)&=\sum_{b:T_b>1}\frac{T_b-k_b}{k_b(T_b-1)}v_b\\
&\leq\sum_{b}\frac{(1-\alpha)T_b}{\alpha T_b(T_0-1)}\bar v
=2\bar v\,r_A .
\end{aligned}
\end{equation}
The display is per response, so averaging over the dataset gives $\E_{\train}[\Var_S(\Delta_U)]\leq2\bar v\,r_A$. Requiring $\frac{\beta^2}{8}\E_{\train}[\Var_S(\Delta_U)]\leq\frac{\beta^2\bar v}{4}r_A\leq\varepsilon_F$ and solving for $\alpha$ yields \cref{eq:alpha-rule}.
\end{proof}

\subsection{Proof of \texorpdfstring{\cref{eq:gradient-bias}}{Eq. (\ref{eq:gradient-bias})} and the stationary displacement}
\label{app:proof-bias}
\begin{proof}
Fix one preference pair and abbreviate $d=\Delta_{\mathrm{full}}$, $g=\nabla_{\vtheta}\Delta_{\mathrm{full}}$, $\delta=\Delta_U-d$, and $e=\nabla_{\vtheta}\Delta_U-g$. Unbiasedness gives $\E\delta=0$ and $\E e=0$, so $V_u=\E\delta^2$ and $V_g=\E\|e\|^2$. Since $\nabla_{\vtheta}\ell(\Delta_U)=\ell'(d+\delta)(g+e)$ and $\nabla_{\vtheta}\ell(d)=\ell'(d)g$,
\begin{equation}
\begin{aligned}
\nabla_{\vtheta}\ell(\Delta_U)-\nabla_{\vtheta}\ell(d)
={}&\bigl[\ell'(d+\delta)-\ell'(d)\bigr]g\\
&+\ell'(d+\delta)e .
\end{aligned}
\end{equation}
The linear term in the first bracket has zero mean, so only $\frac12\ell'''(\xi)\delta^2$ survives, and its norm is at most $\frac12\sup|\ell'''|\,V_u\|g\|$. Because $\ell'''=\beta^3s(1-s)(1-2s)$ with $s=\sigmoid(\beta d)$ satisfies $\max_s|s(1-s)(1-2s)|=1/(6\sqrt3)$, we have $\sup|\ell'''|=\beta^3/(6\sqrt3)$. For the second term, centering with $\E e=0$ and applying Cauchy--Schwarz with $\sup|\ell''|=\beta^2/4$ bounds it by $\frac{\beta^2}{4}\sqrt{V_uV_g}$. Therefore
\begin{equation}
\begin{aligned}
\|\E\nabla_{\vtheta}\ell(\Delta_U)-\nabla_{\vtheta}\ell(d)\|
\leq{}&\frac{\beta^3}{12\sqrt3}V_u\|g\|\\
&+\frac{\beta^2}{4}\sqrt{V_uV_g}.
\end{aligned}
\end{equation}
Averaging over pairs replaces the single-pair quantities by their population counterparts. Substituting $V_u\leq2\bar v r_A$, $V_g\leq2G^2r_A$, and $\|g\|\leq2G$ bounds the two terms by $\frac{\beta^3G\bar v}{3\sqrt3}r_A$ and $\frac{\beta^2G\sqrt{\bar v}}{2}r_A$, so that
\begin{equation}
\begin{aligned}
\|\nabla F_A(\vtheta)-\nabla F(\vtheta)\|&\leq C_A\,r_A,\\
C_A&:=\frac{\beta^3G\bar v}{3\sqrt3}+\frac{\beta^2G\sqrt{\bar v}}{2}.
\end{aligned}
\label{eq:gradient-bias}
\end{equation}

Now let $\vtheta_A$ be an interior stationary point of $F_A$ and $\vtheta^*$ an interior stationary point of $F$, both in the analyzed region, and suppose $\nabla^2F\succeq\mu I$ with $\mu>0$ along the segment joining them. Then $\|\nabla F(\vtheta_A)\|=\|\nabla F(\vtheta_A)-\nabla F_A(\vtheta_A)\|\leq C_Ar_A$. Integrating the Hessian along that segment gives $\langle\nabla F(\vtheta_A),\vtheta_A-\vtheta^*\rangle\geq\mu\|\vtheta_A-\vtheta^*\|^2$; Cauchy--Schwarz together with the previous display yields $\|\vtheta_A-\vtheta^*\|\leq C_Ar_A/\mu$.
\end{proof}

\subsection{Curvature constants from the margin geometry}
\label{app:proof-curvature}
Suppose $|\beta\Delta_{\vtheta}|\leq C$ on the analyzed convex region, $\E_{\train}[\nabla_{\vtheta}\Delta\nabla_{\vtheta}\Delta^{\!\top}]\succeq\nu I$, and $\|\nabla^2_{\vtheta}\Delta\|_{\mathrm{op}}\leq H$. With $m_C=\sigmoid(C)\sigmoid(-C)$, the chain rule gives
\begin{equation}
\begin{aligned}
\nabla^2F&=\E\bigl[\ell''(\Delta)\nabla_{\vtheta}\Delta\nabla_{\vtheta}\Delta^{\!\top}+\ell'(\Delta)\nabla^2_{\vtheta}\Delta\bigr]\\
&\succeq\bigl(\beta^2m_C\nu-\beta H\bigr)I ,
\end{aligned}
\end{equation}
because $\ell''\geq\beta^2m_C$ on the region and $|\ell'|\leq\beta$. Hence $\beta^2m_C\nu>\beta H$ is a sufficient stability condition and one may take $\mu=\beta^2m_C\nu-\beta H$. In a linear-margin model $H=0$, so $\mu>0$ as soon as the comparison matrix has positive smallest eigenvalue $\nu$.

\subsection{Proof of \texorpdfstring{\cref{eq:prefix-bias}}{Eq. (\ref{eq:prefix-bias})}}
\label{app:proof-prefix-bias}
\begin{proof}
Let $h_b=1-M_b/T_b$. Since $M_b=\max(1,\ceil{\rho_bT_b})$ and $\rho_b\sim\mathcal U(\rho_{\min},1)$,
\begin{equation}
\begin{aligned}
\E[h_b]=1-\frac{\E M_b}{T_b}&\leq1-\frac{\E\rho_bT_b}{T_b}\\
&=1-\frac{1+\rho_{\min}}{2}=\frac{1-\rho_{\min}}{2}.
\end{aligned}
\end{equation}
For one realization, split the reward difference into its retained and removed parts:
\begin{equation}
\begin{aligned}
R_{b,M_b}-R_b={}&\Bigl(\frac1{M_b}-\frac1{T_b}\Bigr)\sum_{t\leq M_b}u_{b,t}\\
&-\frac1{T_b}\sum_{t>M_b}u_{b,t},
\end{aligned}
\end{equation}
so $|R_{b,M_b}-R_b|\leq(1-\frac{M_b}{T_b})U+(1-\frac{M_b}{T_b})U=2Uh_b$. The same argument with $\|\nabla_{\vtheta}u_{b,t}\|\leq G$ gives $\|\nabla_{\vtheta}R_{b,M_b}-\nabla_{\vtheta}R_b\|\leq2Gh_b$. The margin difference is the difference of the two sides, so $|\Delta_M-\Delta_{\mathrm{full}}|\leq2U(h_w+h_l)$ and $\|\nabla_{\vtheta}\Delta_M-\nabla_{\vtheta}\Delta_{\mathrm{full}}\|\leq2G(h_w+h_l)$.

The loss is $\beta$-Lipschitz, so $|\ell(\Delta_M)-\ell(\Delta_{\mathrm{full}})|\leq2\beta U(h_w+h_l)$; taking expectations gives the loss bound of \cref{eq:prefix-bias}. For the gradient, decompose
\begin{equation}
\begin{aligned}
&\ell'(\Delta_M)\nabla_{\vtheta}\Delta_M-\ell'(\Delta_{\mathrm{full}})\nabla_{\vtheta}\Delta_{\mathrm{full}}\\
&=\bigl[\ell'(\Delta_M)-\ell'(\Delta_{\mathrm{full}})\bigr]\nabla_{\vtheta}\Delta_M\\
&\quad+\ell'(\Delta_{\mathrm{full}})\bigl[\nabla_{\vtheta}\Delta_M-\nabla_{\vtheta}\Delta_{\mathrm{full}}\bigr].
\end{aligned}
\end{equation}
Each side's mean-reward gradient is an average of token gradients of norm at most $G$, so $\|\nabla_{\vtheta}\Delta_M\|\leq2G$ at every truncation, with no truncation-dependent factor. The first term is therefore at most $\frac{\beta^2}{4}\cdot2U(h_w+h_l)\cdot2G=\beta^2UG(h_w+h_l)$, using $\sup|\ell''|=\beta^2/4$, and the second at most $\beta\cdot2G(h_w+h_l)$. Taking expectations and using $\E[h_w+h_l]\leq1-\rho_{\min}$ from above gives an expected gradient bias of at most $G(\beta^2U+2\beta)(1-\rho_{\min})$, which is the constant $C_P$ used in \cref{app:proof-composed}.

For the optimality statement, $F\leq F_P+\Delta$ with $\Delta=2\beta U(1-\rho_{\min})$, so $F(\vtheta_P^*)\leq F_P(\vtheta_P^*)+\Delta\leq F_P(\vtheta^*)+\Delta\leq F(\vtheta^*)+2\Delta$. The displacement bound follows from the segment argument of \cref{app:proof-bias} applied to the gradient bound above, and inherits the same requirements: an interior stationary point of each objective lying in the analyzed region, and $\nabla^2F\succeq\mu I$ with $\mu>0$ along the joining segment.

The loss bound is a scaling statement, not a guarantee at the tested setting. Since $|R_b|\leq U$ we have $|\Delta|\leq2U$, so the total range of the loss on the relevant interval is $\ell(-2U)-\ell(2U)=2\beta U$. At $\rho_{\min}=1/2$ the optimality-gap bound $4\beta U(1-\rho_{\min})$ reaches $2\beta U$, exactly that range, so it is vacuous at the tested cutoff; the loss bound itself is $2\beta U(1-\rho_{\min})=\beta U$, half that range. Both become restrictive only for cutoffs closer to one.
\end{proof}

\subsection{Proof of \texorpdfstring{\cref{prop:branch-variance}}{Proposition \ref{prop:branch-variance}} and \texorpdfstring{\cref{eq:branch-rate}}{Eq. (\ref{eq:branch-rate})}}
\label{app:proof-branch-rate}
\begin{proof}
With $\vc_w,\vc_l$ as defined in \cref{prop:branch-variance}, $\E[\hat\vg]=\vc_w+\vc_l$ follows from \cref{prop:branch-unbiased}, and
\begin{equation}
\E\|\hat\vg\|^2=q_w\frac{\|\vc_w\|^2}{q_w^2}+q_l\frac{\|\vc_l\|^2}{q_l^2}=\frac{\|\vc_w\|^2}{q_w}+\frac{\|\vc_l\|^2}{q_l},
\end{equation}
so $\Var(\hat\vg)=\E\|\hat\vg\|^2-\|\vc_w+\vc_l\|^2$ is \cref{eq:branch-variance}. At $q_w=q_l=1/2$ it equals $2\|\vc_w\|^2+2\|\vc_l\|^2-\|\vc_w+\vc_l\|^2=\|\vc_w-\vc_l\|^2$. Because $|\lambda|=\beta\sigmoid(-\beta\Delta)\leq\beta$ and $\|\nabla_{\vtheta}R_b\|\leq G$, we have $\|\vc_w\|,\|\vc_l\|\leq\beta G$, hence $\Var(\hat\vg)\leq4\beta^2G^2$, and at the symmetric default also $\|\hat\vg\|=\|\vc_w\|/q_w\leq2\beta G$. Averaging $m$ estimators divides the variance by $m$ when the $m$ draws are independent in their preference pairs, response positions, prefixes and branches; if only the branch is re-drawn, only the branch component is reduced.

For the rate, take the symmetric default $q_w=q_l=1/2$ so that each step's noise has second moment at most $4\beta^2G^2/m$. Let $\vtheta_\Phi^*$ minimize $\Phi$ and $D_n=\E\|\vtheta_n-\vtheta_\Phi^*\|^2$. Projection is non-expansive and $\vtheta_\Phi^*\in\Theta$, so
\begin{equation}
\begin{aligned}
D_{n+1}\leq{}&D_n-2\eta_n\E\langle\nabla\Phi(\vtheta_n),\vtheta_n-\vtheta_\Phi^*\rangle\\
&+\eta_n^2\Bigl(\E\|\nabla\Phi(\vtheta_n)\|^2+\frac{4\beta^2G^2}{m}\Bigr).
\end{aligned}
\end{equation}
Let $A$ be the average Hessian along the segment from $\vtheta_\Phi^*$ to $\vtheta_n$. Strong convexity gives $A\succeq\mu I$ and smoothness gives $A\preceq LI$, so $A^2\preceq LA$. Since $\nabla\Phi(\vtheta_n)=A(\vtheta_n-\vtheta_\Phi^*)$,
\begin{equation}
\|\nabla\Phi(\vtheta_n)\|^2\leq L\langle\nabla\Phi(\vtheta_n),\vtheta_n-\vtheta_\Phi^*\rangle .
\end{equation}
Substituting this and using $\langle\nabla\Phi,\vtheta-\vtheta_\Phi^*\rangle\geq\mu D_n$ together with $\eta_nL\leq1$, which $t_0\geq2L/\mu$ ensures,
\begin{equation}
D_{n+1}\leq(1-\mu\eta_n)D_n+\frac{4\beta^2G^2}{m}\eta_n^2 .
\end{equation}
Insert $\eta_n=2/[\mu(n+t_0)]$, write $s=n+t_0$, and suppose $D_n\leq A_B/s$. Then
\begin{equation}
\begin{aligned}
D_{n+1}&\leq\frac{A_B(s-2)}{s^2}+\frac{16\beta^2G^2}{\mu^2ms^2}\\
&\leq\frac{A_B(s-1)}{s^2}\leq\frac{A_B}{s+1},
\end{aligned}
\end{equation}
where the middle step uses $A_B\geq16\beta^2G^2/(\mu^2m)$ and the last uses $(s-1)(s+1)\leq s^2$. Since $D_0\leq A_B/t_0$, induction gives
\begin{equation}
\begin{aligned}
\E\|\vtheta_N-\vtheta_\Phi^*\|^2&\leq\frac{A_B}{N+t_0},\\
A_B&:=\max\Bigl\{t_0\|\vtheta_0-\vtheta_\Phi^*\|^2,\\&\qquad\frac{16\beta^2G^2}{\mu^2m}\Bigr\}
\end{aligned}
\label{eq:branch-rate}
\end{equation}
for $\eta_n=2/[\mu(n+t_0)]$ and $t_0\geq\max\{2,2L/\mu\}$. Markov's inequality applied to $\|\vtheta_N-\vtheta_\Phi^*\|^2$ gives $\Pr(\|\vtheta_N-\vtheta_\Phi^*\|>\varepsilon_\theta)\leq A_B/[(N+t_0)\varepsilon_\theta^2]$.
\end{proof}

\subsection{Proof of the composed bias and error budget}
\label{app:proof-composed}
\begin{proof}
Write $r_{A|P}=(1-\alpha)/[\alpha(M_0-1)]$.
Write $\nabla F_{AP}-\nabla F=(\nabla F_{AP}-\nabla F_P)+(\nabla F_P-\nabla F)$. Conditioned on the executed lengths $(M_w,M_l)$, the first difference is the position-pruning bias of \cref{eq:gradient-bias} evaluated at those lengths. A fixed prefix can have a larger token-score variance than the whole response, so this difference must be averaged over the cutoff law rather than bounded prefix by prefix. Two facts supply that average. First, the within-prefix sum of squares cannot exceed the whole-response sum of squares: splitting the total sum of squares into the prefix, the suffix and the part between their means gives
\begin{equation}
\begin{aligned}
\sum_{t\leq T}(u_t-\bar u_T)^2
={}&\sum_{t\leq M}(u_t-\bar u_M)^2\\
&+\sum_{t>M}(u_t-\bar u_{>M})^2\\
&+\frac{M(T-M)}{T}\bigl(\bar u_M-\bar u_{>M}\bigr)^2,
\end{aligned}
\end{equation}
so $Mv_M\leq Tv_T$ and hence $v_M\leq(T/M)v_T$, with the suffix terms read as zero when $M=T$. Second, for $M_b=\max(1,\ceil{\rho_bT_b})$ with $\rho_b\sim\mathcal U(\rho_{\min},1)$, prefixes of length one carry zero variance and the remaining lengths satisfy $\E[\mathbf 1_{\{M_b>1\}}T_b/(M_b(M_b-1))]\leq1/(M_0-1)$ with $M_0=\max(2,\ceil{\rho_{\min}T_0})$. Combining the two bounds the prefix-averaged position-pruning bias by $C_Ar_{A|P}$, the factor two accounting for the two sides. The second difference is the prefix bias of \cref{eq:prefix-bias}, contributing $C_P(1-\rho_{\min})$ with $C_P=G(\beta^2U+2\beta)$. Adding the two gives
\begin{equation}
\begin{aligned}
\|\nabla F_{AP}-\nabla F\|&\leq C_Ar_{A|P}+C_P(1-\rho_{\min}),\\
C_P&:=G(\beta^2U+2\beta).
\end{aligned}
\label{eq:composed-bias}
\end{equation}

If the composed objective has interior minimizer $\vtheta_{AP}^*$ and $F$ has curvature at least $\mu_F$ on the relevant segment, the segment argument of \cref{app:proof-bias} gives
\begin{equation}
\|\vtheta_{AP}^*-\vtheta^*\|\leq\frac{C_Ar_{A|P}+C_P(1-\rho_{\min})}{\mu_F}.
\end{equation}
The $L^2$ triangle inequality $\|\vtheta_N-\vtheta^*\|\leq\|\vtheta_N-\vtheta_{AP}^*\|+\|\vtheta_{AP}^*-\vtheta^*\|$, combined with \cref{eq:branch-rate} applied to the composed objective, yields
\begin{equation}
\begin{aligned}
\sqrt{\E\|\vtheta_N-\vtheta^*\|^2}
\;\leq\;&\frac{C_Ar_{A|P}+C_P(1-\rho_{\min})}{\mu_F}\\
&+\sqrt{\frac{A_{B,AP}}{N+t_{0,AP}}},
\end{aligned}
\label{eq:error-budget}
\end{equation}
where $\mu_F$ bounds the curvature of $F$ on the relevant segment and $A_{B,AP},t_{0,AP}$ are the constants of \cref{eq:branch-rate} evaluated for the composed objective.

For the composed optimal loss, $F_{AP}\geq F_P$ by Jensen's inequality in the position masks and $F\leq F_P+2\beta U(1-\rho_{\min})$. Chaining these as in \cref{app:proof-optimality} gives
\begin{equation}
F(\vtheta_{AP}^*)-F(\vtheta^*)\leq\frac{\beta^2\bar v}{4}r_{A|P}+4\beta U(1-\rho_{\min}).
\end{equation}
\end{proof}

\section{Shared-Parameter Analysis}
\label{app:conf-analysis}
This appendix expands the shared-parameter coupling discussed in the introduction.

\subsection{Shared-Parameter Updates and Signed Token Movement}
\label{app:coupled}
Let $i$ index token positions across a pair, $p_i$ be the current-policy target-token probability, and $\vphi_i=\nabla_{\vtheta}\log p_i$ its log-probability gradient. Set $c_i=1/k_w$ for a selected chosen token, $c_i=-1/k_l$ for a selected rejected token, and $c_i=0$ otherwise. Within a region of fixed mask membership,
\begin{equation}
\vg=\nabla_{\vtheta}\Delta=\sum_i c_i\vphi_i.
\end{equation}
For a small gradient-descent step of size $\eta>0$ on the pair loss, the parameter change is $\eta\beta\sigmoid(-\beta\Delta)\vg$. Writing $\delta$ for the resulting change, to first order
\begin{align}
\delta\log p_i&=\eta\beta\sigmoid(-\beta\Delta)\vphi_i^\top \vg+\mathcal{O}(\eta^2),\label{eq:coupled-update}\\
\delta\Delta&=\eta\beta\sigmoid(-\beta\Delta)\|\vg\|^2+\mathcal{O}(\eta^2).\nonumber
\end{align}
The pair margin increases to first order, yet an individual chosen probability can decrease if $\vphi_i^\top \vg<0$. Token confidence alone does not determine these inner products \citep{yuan2025a,razin2025unintentional}. Minibatching and AdamW further change the actual update. The single-pair gradient calculation identifies the missing interaction rather than modeling that optimizer exactly.

\section{Experimental Settings and Results}
\label{app:helpsteer}
This appendix documents the experimental settings and additional results: the UltraFeedback motivation experiments (\cref{app:original-results}), the HelpSteer3 corpus and recipe (\cref{app:hs-recipe}), the retention-sweep geometry (\cref{app:hs-sweep}), the factorial and parameter sweeps (\cref{app:hs-factorial}), the full per-task tables (\cref{app:task-tables}), the detailed HelpSteer3 tables (\cref{app:main-tables}), and the supplemental attribution and Full--Branch validation (\cref{app:supplement}).

\subsection{The UltraFeedback Experiments}
\label{app:original-results}
The UltraFeedback collection uses sequence length 512, prompt length 256, one training epoch, AdamW, cosine decay, BFloat16 computation, 4-bit loading, gradient checkpointing, and LoRA on attention and MLP projections. The reference is the frozen policy initialization. Qwen thinking mode is disabled for evaluation. Experiments run on Ubuntu 22.04 with two NVIDIA RTX 5090 GPUs (32\,GiB each), Python 3.12, PyTorch 2.8.0, and CUDA 12.8. Masks are selected independently on the two sides each step, with the chosen side ranked by current-policy confidence (Low) or uniformly (Random), and gradients bypass the discrete selection.

All three models use LoRA dropout $0.05$. The principal comparison uses retention $0.8$ and training seed 42 for each rule. Pythia-2.8B uses learning rate $5\times10^{-7}$, DPO $\beta=0.1$, LoRA rank 16 and scaling alpha 32, per-device batch size 2, 16 gradient accumulation steps, and warmup fraction $0.10$. Qwen3-4B uses learning rate $1.5\times10^{-6}$, $\beta=0.5$, LoRA rank 64 and scaling alpha 128, per-device batch size 2, 16 gradient accumulation steps, and warmup fraction $0.08$. Qwen3-8B uses learning rate $7.5\times10^{-7}$, $\beta=0.5$, LoRA rank 32 and scaling alpha 64, per-device batch size 1, 32 gradient accumulation steps, and warmup fraction $0.10$.

The tasks include ARC Challenge \citep{clark2018think}, BBH \citep{suzgun2022challenging}, MMLU-Pro \citep{wang2024mmlupro}, MuSR \citep{sprague2024mus}, TruthfulQA MC1 \citep{lin2022truthfulqa}, IFEval prompt-level strict accuracy \citep{zhou2023instruction}, GSM8K \citep{cobbe2021training}, and MATH-Hard drawn from the MATH dataset \citep{hendrycks2021math}. ARC uses 25-shot normalized accuracy, and GSM8K uses 5-shot exact-answer evaluation. The per-task tables (\cref{tab:qwen3-results,tab:pythia-results}) preserve evaluator-reported standard errors, which do not measure training-seed variability. Mean accuracy is a point summary. Pythia's mean accuracy excludes GSM8K and MATH-Hard because all its variants score zero on both. The summary averages cited in \cref{sec:motivation} are: Low$-$All $+1.95/+0.51/+0.32$, Random$-$All $+1.58/+0.00/+0.06$, and Low$-$Random $+0.37/+0.51/+0.26$ points for Pythia-2.8B, Qwen3-4B, and Qwen3-8B, respectively. The High/High row and the cross-side rules (Low/High, High/Low) are shown for completeness only. They are not part of the matched Low--Random comparison and are not interpreted in the main text. The UltraFeedback and HelpSteer3 collections differ in recipe and evaluation protocol, so absolute scores are not comparable across them.

\begin{table*}[t]
\centering
\setlength{\tabcolsep}{3pt}
\begin{tabular*}{\textwidth}{@{\extracolsep{\fill}}lcccc@{}}
\toprule
\rowcolor{tableHead}
Benchmark & All/All & Random/Random & High/High & Low/Low \\
\midrule
\multicolumn{5}{@{}l}{\emph{Qwen3-4B}} \\
ARC & \metric{62.80}{1.41} & \bestmetric{62.97}{1.41} & \metric{62.46}{1.42} & \bestmetric{62.97}{1.41} \\
BBH & \metric{56.16}{3.01} & \metric{56.16}{3.01} & \metric{56.08}{3.01} & \bestmetric{56.33}{3.01} \\
MMLU-Pro & \metric{42.46}{0.45} & \metric{42.25}{0.45} & \metric{42.24}{0.45} & \bestmetric{42.50}{0.45} \\
MuSR & \metric{42.46}{1.73} & \bestmetric{43.39}{1.74} & \metric{42.72}{1.74} & \metric{43.25}{1.74} \\
TruthfulQA & \metric{37.94}{1.70} & \metric{38.07}{1.70} & \metric{36.11}{1.68} & \bestmetric{38.56}{1.70} \\
IFEval & \metric{43.99}{2.14} & \metric{40.67}{2.11} & \bestmetric{44.36}{2.14} & \metric{43.56}{2.10} \\
GSM8K & \metric{74.15}{1.21} & \metric{74.98}{1.19} & \metric{68.76}{1.28} & \bestmetric{75.36}{1.19} \\
MATH-Hard & \metric{37.94}{3.44} & \metric{39.39}{3.45} & \metric{37.25}{3.36} & \bestmetric{39.50}{3.46} \\
Mean accuracy (\%) & 49.74 & 49.74 & 48.75 & 50.25 \\
\midrule
\multicolumn{5}{@{}l}{\emph{Qwen3-8B}} \\
ARC & \metric{68.17}{1.36} & \metric{68.09}{1.36} & \bestmetric{68.34}{1.36} & \metric{68.29}{1.36} \\
BBH & \metric{60.84}{2.98} & \metric{60.80}{2.98} & \metric{60.55}{2.99} & \bestmetric{61.34}{2.98} \\
MMLU-Pro & \metric{47.63}{0.46} & \bestmetric{47.70}{0.46} & \metric{47.61}{0.46} & \metric{47.68}{0.46} \\
MuSR & \metric{43.39}{1.77} & \bestmetric{43.78}{1.77} & \metric{42.86}{1.77} & \metric{43.39}{1.77} \\
TruthfulQA & \metric{36.72}{1.69} & \bestmetric{36.96}{1.69} & \metric{36.84}{1.69} & \metric{36.84}{1.69} \\
IFEval & \metric{25.88}{1.88} & \metric{25.32}{1.87} & \metric{24.03}{1.84} & \bestmetric{26.06}{1.89} \\
GSM8K & \metric{78.09}{1.14} & \metric{77.63}{1.15} & \metric{77.86}{1.14} & \bestmetric{78.62}{1.13} \\
MATH-Hard & \metric{46.25}{3.45} & \metric{47.13}{3.44} & \metric{46.38}{3.49} & \bestmetric{47.29}{3.46} \\
Mean accuracy (\%) & 50.87 & 50.93 & 50.56 & 51.19 \\
\bottomrule
\end{tabular*}
\caption{UltraFeedback Qwen3 task scores (\%) with evaluator-reported standard errors. Mean accuracy is the eight-benchmark point mean. Non-All rules use retention 0.8. These errors do not measure training variability.}
\label{tab:qwen3-results}
\end{table*}

\begin{table*}[!t]
\centering
\small
\setlength{\tabcolsep}{3pt}
\begin{tabular}{lccccccc}
\toprule
\rowcolor{tableHead}
Method & ARC & BBH & MMLU-Pro & MuSR & IFEval & TruthfulQA & \shortstack{Mean\\accuracy (\%)} \\
\midrule
All/All & \metric{25.51}{1.27} & \metric{28.46}{2.64} & \metric{10.59}{0.28} & \metric{35.85}{1.71} & \metric{11.46}{1.37} & \metric{22.40}{1.46} & 22.38 \\
Random/Random & \bestmetric{28.16}{1.31} & \metric{27.47}{2.55} & \metric{10.86}{0.28} & \metric{36.64}{1.72} & \metric{15.53}{1.56} & \metric{25.09}{1.52} & 23.96 \\
Low/Low & \metric{27.73}{1.31} & \metric{28.47}{2.69} & \metric{10.84}{0.28} & \bestmetric{39.95}{1.73} & \metric{11.46}{1.37} & \bestmetric{27.54}{1.56} & 24.33 \\
Low/High & \metric{24.91}{1.26} & \metric{28.92}{2.69} & \bestmetric{11.66}{0.29} & \metric{35.58}{1.70} & \bestmetric{18.30}{1.66} & \metric{22.64}{1.47} & 23.67 \\
High/Low & \metric{26.11}{1.28} & \metric{29.47}{2.72} & \bestmetric{11.66}{0.29} & \metric{34.79}{1.69} & \metric{12.57}{1.43} & \metric{23.01}{1.47} & 22.93 \\
High/High & \metric{25.51}{1.27} & \bestmetric{29.99}{2.76} & \metric{11.16}{0.29} & \metric{34.39}{1.68} & \metric{12.20}{1.41} & \metric{23.13}{1.48} & 22.73 \\
\bottomrule
\end{tabular}
\caption{UltraFeedback Pythia-2.8B task scores (\%) at retention 0.8. Per-task errors are evaluator-reported standard errors. Mean accuracy is the six-task point mean; GSM8K and MATH-Hard are excluded because all variants score zero on both.}
\label{tab:pythia-results}
\end{table*}

\subsection{The HelpSteer3 Corpus and Recipe}
\label{app:hs-recipe}
\paragraph{Preference conversion and recipe.}
The preference conversion retains 36,264 of 38,459 training pairs and 1,917 of 2,017 validation pairs, removing ties and empty responses. The main experiments use Qwen3-4B with DPO $\beta=1$, LoRA rank 64 and alpha 128, dropout $0.05$ on attention and MLP projections, BFloat16 computation, FlashAttention-2, gradient checkpointing, AdamW at learning rate $10^{-6}$, cosine decay, warmup fraction $0.05$, and maximum gradient norm $0.5$. Sequence and prompt limits are 2,048 and 1,024. Per-device batch is 2. The factorial and parameter sweeps use training and selection seed 42. The reference is computed online. The supplemental Full--Branch comparisons retain this recipe at Qwen3-4B seed 43 and Mistral-7B seed 42 (\cref{app:supplement}).

\subsection{Retention-Sweep Geometry}
\label{app:hs-sweep}
The sweep in \cref{tab:hs-pairs} compares Low/Low and Random/Random at retentions $0.1$--$0.9$, with masks recomputed every step. Gradient accumulation is 16 for retentions $0.1$--$0.4$ and 4 for $0.5$--$0.9$, giving effective batches of 32 (1,133 updates) and 64 (566 updates) for one epoch. Each Low--Random pair within a block shares its geometry and data order. Comparisons are therefore interpreted within each retention level, and the two blocks do not form one controlled dose-response curve. \Cref{tab:hs-full} gives the full per-task scores.

\subsection{The Factorial and Parameter Sweeps}
\label{app:hs-factorial}
\paragraph{Factorial.}
The factorial of \cref{sec:setup} uses the same corpus and recipe with a fixed batch geometry: per-device batch 2, gradient accumulation 4, eight workers, global batch 64 pairs, 566 optimizer steps for one epoch. Position pruning uses $\alpha=0.6$ (uniform response-position subsets, selected-token mean reward). Sequence pruning uses $\rho_b\sim\mathcal U(0.5,1.0)$ (prefix truncation with repacking and a matching reference cutoff). Branch pruning uses $q_w=q_l=0.5$ (one backward branch per pair, opposite branch forwarded without gradient, $1/q_b$ importance correction and stop-gradient surrogate). Composition follows \cref{alg:stochastic-dpo}: truncation first, then response-position pruning inside the retained prefix, then branch pruning. Policy and reference see identical truncations and masks. All eight configurations in \cref{tab:factorial} were trained end-to-end under this recipe and evaluated on all ten benchmarks.

Realized sparsity matches the nominal targets. With all three modes enabled, the selected fraction is $0.6013$, the mean $\rho_b$ is $0.7545$, and the response retention is $0.7662$.

\begin{table*}[ht]
\centering
\footnotesize
\setlength{\tabcolsep}{1.2pt}
\begin{tabular*}{\textwidth}{@{\extracolsep{\fill}}l Z X YYYYYYY Y YYY G@{}}
\toprule
\rowcolor{tableHead}
& \multicolumn{2}{c}{\scriptsize Efficiency} & \multicolumn{8}{c}{\scriptsize Capability (accuracy, \%)} & \multicolumn{4}{c}{\scriptsize Open-ended}\\
\cmidrule(lr){2-3}\cmidrule(lr){4-11}\cmidrule(lr){12-15}
\rowcolor{tableHead}
{\scriptsize Method} & {\scriptsize Wall} & {\scriptsize Speed} & {\scriptsize ARC} & {\scriptsize GSM8K} & {\scriptsize TQA} & {\scriptsize BBH} & {\scriptsize MATH} & {\scriptsize MMLU} & {\scriptsize MuSR} & {\scriptsize\shortstack{Mean\\accuracy\\(\%)}} & {\scriptsize MT} & {\scriptsize AH} & {\scriptsize WB} & {\scriptsize\shortstack{Mean\\generation\\score}}\\
\midrule
\rowcolor{tableBase}
Full & 187.8 & 1.000$\times$ & \textbf{62.54} & \textbf{87.41} & \textbf{53.88} & 55.64 & 63.59 & \textbf{42.28} & 42.99 & \textbf{58.33} & 6.649 & \textbf{50.12} & 3.139 & \underline{49.33}\\
Position & 185.9 & 1.010$\times$ & \underline{62.46} & 86.43 & \underline{53.51} & 55.58 & 63.26 & 42.18 & 42.99 & 58.06 & \underline{6.829} & 44.85 & 3.200 & 48.38\\
Sequence & 186.4 & 1.008$\times$ & 62.37 & 86.28 & 53.40 & \textbf{55.92} & 63.27 & \underline{42.20} & 42.86 & 58.04 & 6.432 & 45.80 & 3.186 & 47.33\\
\rowcolor{tableFocus}
Branch & 128.4 & 1.463$\times$ & 62.03 & \underline{86.66} & 53.06 & 55.73 & \textbf{63.65} & 42.16 & \textbf{43.25} & \underline{58.08} & 6.800 & \underline{50.11} & \textbf{3.263} & \textbf{50.25}\\
Position+Sequence & 185.0 & 1.015$\times$ & 62.20 & 85.97 & 53.06 & \underline{55.74} & \underline{63.63} & 42.07 & \underline{43.12} & 57.97 & \textbf{6.893} & 45.54 & 3.208 & 48.85\\
Position+Branch & 127.8 & 1.469$\times$ & 61.95 & 86.50 & 52.92 & 55.64 & 63.31 & 42.12 & \underline{43.12} & 57.94 & 6.676 & 45.80 & 3.196 & 48.17\\
Sequence+Branch & \textbf{126.3} & \textbf{1.487$\times$} & 62.03 & 86.20 & 52.84 & 55.69 & 63.06 & 42.05 & \underline{43.12} & 57.86 & 6.600 & 45.05 & 3.197 & 47.67\\
\shortstack[l]{Position+Sequence\\+Branch} & \underline{127.0} & \underline{1.479$\times$} & 62.03 & 85.97 & 52.75 & 55.65 & 62.24 & 42.05 & \underline{43.12} & 57.69 & 6.693 & 45.86 & \underline{3.225} & 48.35\\
\bottomrule
\end{tabular*}
\caption{Full factorial task scores underlying \cref{tab:factorial}. Metrics and score units follow \cref{sec:setup}. Bold/underline mark the best/second-best displayed column values.}
\label{tab:factorial-tasks}
\end{table*}

\paragraph{Parameter sweeps.}
The within-axis sweeps of \cref{tab:param-sweep} are ten trainings that share the factorial's recipe and differ from it only in the ablated knob. The sweep midpoints ($\alpha{=}0.6$, $\mathcal U(0.5,1)$, $q_w{=}0.5$) coincide with the factorial's Position, Sequence, and Branch rows. \Cref{tab:b-profile} gives the sequence-pruning resource profile in full. Realized sparsity matches nominal for position pruning (selected fraction $0.302/0.601/0.902$) and branch pruning (global chosen fractions $0.298/0.498/0.699$ for $q_w=0.3/0.5/0.7$). For sequence pruning, the sampled cutoff matches its distribution (mean $\rho_b$ $0.632/0.755/0.877$ against theory $0.625/0.750/0.875$, and $\rho_b=0.750$ exactly for the fixed-cutoff configuration), with corresponding response-token retention $0.65/0.77/0.88$. The Tokens column in \cref{tab:b-profile} ($0.854/0.893/0.943/0.894$) is a third quantity: the executed-sequence fraction, the ratio of executed sequence tokens, prompt and padding included, to the stock baseline. It lies above the response-token retention and must not be compared with the nominal $\rho_b$. Peak HBM was not measured across $q_w$, so branch pruning is compared on training wall-clock only. The per-task scores of all ten runs are in \cref{tab:sweep-full}.

\begin{table*}[t]
\centering
\small
\setlength{\tabcolsep}{2pt}
\begin{tabular}{l VVVVVVV V VVV G}
\toprule
\rowcolor{tableHead}
{\footnotesize Setting} & {\footnotesize ARC} & {\footnotesize GSM8K} & {\footnotesize TQA} & {\footnotesize BBH} & {\footnotesize MATH} & {\footnotesize MMLU} & {\footnotesize MuSR} & {\scriptsize\shortstack{Mean\\accuracy\\(\%)}} & {\footnotesize MT} & {\footnotesize AH} & {\footnotesize WB} & {\scriptsize\shortstack{Mean\\generation\\score}}\\
\midrule
Position $\alpha{=}0.3$ & 62.29 & 86.20 & 53.15 & 55.61 & 62.78 & 42.15 & 43.12 & 57.90 & 6.568 & 45.56 & 3.195 & 47.73\\
\rowcolor{tableFocus}
Position $\alpha{=}0.6$ & 62.46 & 86.43 & 53.51 & 55.58 & 63.26 & 42.18 & 42.99 & 58.06 & 6.829 & 44.85 & 3.200 & 48.38\\
Position $\alpha{=}0.9$ & 62.54 & 87.34 & 53.83 & 55.64 & 63.54 & 42.26 & 43.12 & 58.32 & 6.707 & 48.00 & 3.081 & 48.63\\
Sequence $\mathcal U(0.25,1)$ & 62.37 & 86.35 & 53.30 & 55.80 & 63.53 & 42.15 & 43.12 & 58.09 & 6.552 & 47.05 & 3.117 & 47.91\\
\rowcolor{tableFocus}
Sequence $\mathcal U(0.5,1)$ & 62.37 & 86.28 & 53.40 & 55.92 & 63.27 & 42.20 & 42.86 & 58.04 & 6.432 & 45.80 & 3.186 & 47.33\\
Sequence $\mathcal U(0.75,1)$ & 62.46 & 86.28 & 53.44 & 56.03 & 63.16 & 42.16 & 42.72 & 58.04 & 6.874 & 47.92 & 3.193 & 49.53\\
Sequence fixed $0.75$ & 62.29 & 86.73 & 53.38 & 55.95 & 63.23 & 42.13 & 42.86 & 58.08 & 6.562 & 43.33 & 3.214 & 47.03\\
Branch $q_w{=}0.3$ & 62.37 & 87.34 & 53.03 & 55.44 & 62.70 & 42.16 & 43.39 & 58.06 & 6.669 & 47.82 & 3.189 & 48.80\\
\rowcolor{tableFocus}
Branch $q_w{=}0.5$ & 62.03 & 86.66 & 53.06 & 55.73 & 63.65 & 42.16 & 43.25 & 58.08 & 6.800 & 50.11 & 3.263 & 50.25\\
Branch $q_w{=}0.7$ & 62.03 & 86.58 & 52.97 & 55.75 & 62.92 & 42.15 & 43.39 & 57.97 & 6.753 & 46.64 & 3.205 & 48.74\\
\bottomrule
\end{tabular}
\caption{Per-task single-axis sweeps, one training seed. Shading marks defaults; rows specify settings. Metric units follow \cref{sec:setup}. Evaluator standard errors ($0.45$--$3.02$\,pp) do not test aggregate differences or training variability.}
\label{tab:sweep-full}
\end{table*}

Mean generation score in every table uses the reported task means and the scaling in \cref{sec:setup}.

\paragraph{Descriptive factorial contrasts.}
Raw differences of aggregate columns across the factorial are descriptive contrasts on single-seed runs. Contrasts use unrounded aggregates and can differ from differences of the displayed columns by $0.01$. Adding position pruning changes mean accuracy by $-0.27$, $-0.07$, $-0.14$, and $-0.17$\,pp over Full, Sequence, Branch, and Sequence+Branch, respectively. Adding sequence pruning changes it by $-0.29$, $-0.09$, $-0.22$, and $-0.25$\,pp over Full, Position, Branch, and Position+Branch. Adding branch pruning changes it by $-0.26$, $-0.12$, $-0.19$, and $-0.28$\,pp over Full, Position, Sequence, and Position+Sequence.

Contrasts in mean generation score use the scaling in \cref{sec:setup} and have mixed signs. Position pruning gives $-0.95$, $+1.52$, $-2.07$, and $+0.68$ in the four pairings above. Sequence pruning gives $-2.01$, $+0.47$, $-2.58$, and $+0.17$. Branch pruning gives $+0.92$, $-0.21$, $+0.35$, and $-0.50$. All twelve mean accuracy contrasts are negative. The branch-pruning effect on training wall-clock is consistent in sign: $-59.4$, $-58.1$, $-60.1$, and $-58.0$\,min in the four pairings.

\subsection{Full Task Tables}
\label{app:task-tables}
\Cref{tab:hs-full} reports per-task scores for Low/Low and Random/Random at each retention level on HelpSteer3-Preference.
\begin{table*}[t]
\centering
\small
\setlength{\tabcolsep}{3pt}
\begin{tabular}{llrrrrrrrr}
\toprule
\rowcolor{tableHead}
Selection & Retention & ARC & GSM8K & TruthfulQA & BBH & MATH-Hard & MMLU-Pro & MuSR & \shortstack{Mean\\accuracy (\%)}\\
\midrule
Low/Low & 0.1 & 63.48 & 85.82 & 54.57 & 55.54 & 62.39 & 42.26 & 42.86 & 58.13 \\
Random/Random & 0.1 & 62.88 & 86.73 & 53.11 & 55.41 & 63.36 & 42.00 & 42.59 & 58.01 \\
Low/Low & 0.2 & 63.82 & 86.58 & 55.19 & 55.32 & 60.29 & 42.33 & 42.59 & 58.02 \\
Random/Random & 0.2 & 63.05 & 86.35 & 53.62 & 55.06 & 63.14 & 42.02 & 42.46 & 57.96 \\
Low/Low & 0.3 & 63.82 & 87.26 & 55.49 & 55.47 & 58.47 & 42.35 & 42.59 & 57.92 \\
Random/Random & 0.3 & 63.23 & 86.50 & 54.46 & 55.51 & 63.20 & 42.04 & 42.72 & 58.24 \\
Low/Low & 0.4 & 63.99 & 88.02 & 55.70 & 55.43 & 57.67 & 42.32 & 42.86 & 58.00 \\
Random/Random & 0.4 & 63.40 & 86.88 & 54.85 & 55.64 & 61.92 & 42.10 & 42.59 & 58.20 \\
Low/Low & 0.5 & 63.14 & 87.19 & 53.81 & 55.63 & 63.66 & 42.25 & 43.12 & 58.40 \\
Random/Random & 0.5 & 63.05 & 86.50 & 53.38 & 55.63 & 63.92 & 42.22 & 43.12 & 58.26 \\
Low/Low & 0.6 & 63.40 & 87.26 & 53.85 & 55.60 & 63.33 & 42.22 & 43.12 & 58.40 \\
Random/Random & 0.6 & 63.05 & 86.81 & 53.52 & 55.71 & 64.11 & 42.20 & 42.99 & 58.34 \\
Low/Low & 0.7 & 63.31 & 87.26 & 53.88 & 55.53 & 63.39 & 42.25 & 42.99 & 58.37 \\
Random/Random & 0.7 & 63.23 & 86.81 & 53.67 & 55.77 & 64.17 & 42.20 & 42.86 & 58.39 \\
Low/Low & 0.8 & 63.14 & 87.34 & 53.91 & 55.60 & 63.28 & 42.27 & 42.99 & 58.36 \\
Random/Random & 0.8 & 63.05 & 87.49 & 53.76 & 55.78 & 64.01 & 42.24 & 42.59 & 58.42 \\
Low/Low & 0.9 & 63.23 & 87.26 & 53.92 & 55.57 & 63.43 & 42.30 & 42.99 & 57.55$^*$ \\
Random/Random & 0.9 & 63.14 & 87.11 & 53.86 & 55.61 & -- & 42.26 & 43.12 & 57.52$^*$ \\
\midrule
Qwen3-4B & -- & 62.80 & 85.90 & 52.27 & 55.59 & 62.80 & 41.98 & 43.39 & 57.82 \\
Unmasked DPO & -- & 62.63 & 87.57 & 53.91 & 55.61 & 63.33 & 42.30 & 43.12 & 58.35 \\
\bottomrule
\end{tabular}
\caption{HelpSteer3-Preference task scores (\%). Low and Random are compared within a retention level. Retentions 0.1--0.4 and 0.5--0.9 use different training schedules. Mean accuracy is the seven-task mean. $^*$At $0.9$ both means use six tasks, excluding MATH-Hard, which is missing for Random. The last two rows are baselines: the untrained Qwen3-4B starting checkpoint and unmasked DPO, which provides context but does not use a matched training schedule. A dash denotes a missing result.}
\label{tab:hs-full}
\end{table*}

\subsection{Detailed HelpSteer3 Tables}
\label{app:main-tables}
The following tables retain the count-matched motivation results and the per-axis parameter settings summarized in the main text; the completed factorial is in \cref{tab:factorial}. All factorial and sweep configurations use seed 42.

\begin{table}[t]
\centering
\small
\setlength{\tabcolsep}{4pt}
\begin{tabular}{rrrr}
\toprule
\rowcolor{tableHead}
$\alpha$ & Low/Low & Random/Random & Difference\\
\midrule
\multicolumn{4}{l}{Batch 32}\\
0.1 & 58.13 & 58.01 & +0.12\\
0.2 & 58.02 & 57.96 & +0.06\\
0.3 & 57.92 & 58.24 & $-0.32$\\
0.4 & 58.00 & 58.20 & $-0.20$\\
\midrule
\multicolumn{4}{l}{Batch 64}\\
0.5 & 58.40 & 58.26 & +0.14\\
0.6 & 58.40 & 58.34 & +0.06\\
0.7 & 58.37 & 58.39 & $-0.02$\\
0.8 & 58.36 & 58.42 & $-0.06$\\
0.9 & 57.55 & 57.52 & +0.03$^*$\\
\bottomrule
\end{tabular}
\caption{Qwen3-4B on HelpSteer3-Preference: mean capability scores (\%) for confidence-ranked (Low/Low) versus matched uniform (Random/Random) masks at retention $\alpha$. Comparisons are within rows. $^*$At $0.9$ both means use six tasks, excluding MATH-Hard, which is missing for Random. Differences use displayed means.}
\label{tab:hs-pairs}
\end{table}

\subsection{Supplemental Attribution and Full--Branch Validation}
\label{app:supplement}
The attribution and Full--Branch validation add four complete trainings and two resource profiles on the same eight-accelerator hardware. Results are in \cref{tab:a-attribution,tab:c-validation,tab:branch-profile}; this section gives reproduction and scoring details.

\paragraph{Output implementation control.}
One Qwen profiler compares Full-concat, selected-head execution at retention $1.0$, and selected-head execution at $0.6$. The configurations share weights, inputs, reference computation, precision, and the $2\times4\times8$ training geometry (per-device batch 2, gradient accumulation 4, eight workers). Four warm-up rounds precede 16 measured rounds. Configuration order rotates within rounds, memory counters reset before each measurement, and peak HBM is the maximum over ranks. Full-retention selected-head rewards match Full-concat exactly, as do dense-reduced and selected-head rewards under identical masks. Numerical checks also compare LoRA gradients with a row-order control. Mean step times and peak HBM are reported in \cref{tab:a-attribution}. Memory differences are computed within this profile: $8{,}905$\,MB for full-supervision selected-head execution and a further $642$\,MB for random deletion.

\paragraph{Matched training pairs.}
Qwen Full and Branch at seed 43 use the same initialization, dataset version, data order, training budget, optimizer, schedule, precision, attention backend, checkpointing, and online reference. Mistral-7B-Instruct-v0.3 uses the same recipe and 566-step budget at seed 42. Both models fit the $2\times4\times8$ geometry without a batch-size change. Branch pruning uses $q_w=q_l=0.5$, with position and sequence pruning disabled. The global chosen-branch fractions are $0.5088$ for Qwen seed 43 and $0.4956$ for Mistral, averaged over all optimizer steps.

The Mistral setup is verified on chat template, prompt masking, EOS padding, the online reference, and the selected/no-gradient branches. The gradient checks in \cref{sec:c-validation} disable LoRA dropout while retaining checkpointing; the trainings use dropout $0.05$.

\paragraph{Per-task results.}
\Cref{tab:supplement-tasks} retains every benchmark score for the repeated Qwen pair and the Mistral pair. \Cref{tab:supplement-tasks} retains the native open-ended scores; \cref{tab:c-validation} reports category means. The conventional Branch-minus-Full differences are $-0.25$ and $-0.45$\,pp for the two Qwen seeds (a descriptive mean of $-0.35$\,pp), and $-0.31$\,pp for Mistral.

\begin{table*}[t]
\centering
\scriptsize
\setlength{\tabcolsep}{3pt}
\begin{tabular*}{\textwidth}{@{\extracolsep{\fill}}ll ZZZZZZZ Z ZZZ G@{}}
\toprule
\rowcolor{tableHead}
Model / seed & Config. & ARC & GSM8K & TQA & BBH & MATH & MMLU & MuSR & \shortstack{Mean\\accuracy\\(\%)} & MT & AH & WB & \shortstack{Mean\\generation\\score}\\
\midrule
Qwen3-4B / 42 & Full & 62.54 & 87.41 & 53.88 & 55.64 & 63.59 & 42.28 & 42.99 & 58.33 & 6.649 & 50.12 & 3.139 & 49.33\\
\rowcolor{tableFocus}
 & Branch & 62.03 & 86.66 & 53.06 & 55.73 & 63.65 & 42.16 & 43.25 & 58.08 & 6.800 & 50.11 & 3.263 & 50.25\\
\midrule
Qwen3-4B / 43 & Full & 62.54 & 87.04 & 53.88 & 55.55 & 63.45 & 42.34 & 42.99 & 58.26 & 6.623 & 44.36 & 3.233 & 47.64\\
\rowcolor{tableFocus}
 & Branch & 62.03 & 86.20 & 53.05 & 55.55 & 62.57 & 42.14 & 43.12 & 57.81 & 6.750 & 45.03 & 3.295 & 48.49\\
\midrule
Mistral-7B / 42 & Full & 67.06 & 52.46 & 64.82 & 46.37 & 12.50 & 30.47 & 40.34 & 44.86 & 5.130 & 24.60 & 2.361 & 33.17\\
\rowcolor{tableFocus}
 & Branch & 65.27 & 51.18 & 62.87 & 46.51 & 12.96 & 30.32 & 42.72 & 44.55 & 5.373 & 24.00 & 2.405 & 33.93\\
\bottomrule
\end{tabular*}
\caption{Complete Full--Branch scores in the units of \cref{sec:setup}. Qwen seed-42 values repeat the factorial evaluations in \cref{tab:factorial}.}
\label{tab:supplement-tasks}
\end{table*}

\paragraph{Open-ended scoring.}
All supplemental generations are judged by MiniMax-M3. The evaluator reports valid-item means separately for each checkpoint. The supplemental WildBench suite has 260 prompts, four more than the main evaluation's 256. The Qwen seed-43 Full Arena-Hard generation has 499 judged entries and 443 valid scores. \Cref{tab:supplement-tasks} reports per-checkpoint scores. These summaries do not characterize training-seed variability.

\section{Additional Token-Selection Analysis}
\label{app:moved}
This section reports the matched token-selection comparisons referenced in the main text.

\subsection{Supervision Membership Is a Limited Lever}
\label{sec:motivation}
We compare selecting positions with the lowest current-policy target-token probabilities (Low) against uniform random selection (Random). Low/Low and Random/Random use the same selection rule on both chosen and rejected responses. At equal retained ratios, Low/Low exceeds Random/Random by $0.37$, $0.51$, and $0.26$ mean-accuracy points on UltraFeedback \citep{ultrafeedback} for Pythia-2.8B \citep{pythia}, Qwen3-4B, and Qwen3-8B. On HelpSteer3-Preference, the difference changes sign across retention levels and stays within $0.32$ points (\cref{tab:hs-pairs}). Task differences are larger: at low retention, Low trails Random on MATH-Hard while leading TruthfulQA (\cref{tab:hs-full}). These comparisons motivate uniform position pruning before vocabulary projection. We compare selection rules at a fixed number of scored positions. The factorial experiment separately assesses the model-quality cost of reduced supervision relative to the Full baseline.

\section{LLM Usage Disclosure}
\label{app:llm_disclosure}
In answer to the submission questions on the use of large language models, a large language model was used to aid and polish the writing of this paper, for retrieval and discovery such as finding related work, and to draft and revise sections of the manuscript.

\end{document}

%% file: math_commands.tex
\usepackage{amsmath,amsfonts,bm}

\def\eqref#1{equation~\ref{#1}}

\def\ceil#1{\lceil #1 \rceil}

\def\1{\bm{1}}
\newcommand{\train}{\mathcal{D}}

\def\vtheta{{\bm{\theta}}}
\def\vphi{{\bm{\phi}}}

\def\vc{{\bm{c}}}

\def\vg{{\bm{g}}}

\def\vw{{\bm{w}}}
\def\vx{{\bm{x}}}
\def\vy{{\bm{y}}}
\def\vz{{\bm{z}}}

\def\evy{{y}}

\def\mSigma{{\bm{\Sigma}}}

\DeclareMathAlphabet{\mathsfit}{\encodingdefault}{\sfdefault}{m}{sl}
\SetMathAlphabet{\mathsfit}{bold}{\encodingdefault}{\sfdefault}{bx}{n}

\def\sB{{\mathbb{B}}}

\def\sI{{\mathbb{I}}}

\def\sS{{\mathbb{S}}}

\def\sU{{\mathbb{U}}}

\newcommand{\E}{\mathbb{E}}
\newcommand{\Ls}{\mathcal{L}}

\newcommand{\sigmoid}{\sigma}
\newcommand{\softplus}{\zeta}

\newcommand{\Var}{\mathrm{Var}}

\newcommand{\Cov}{\mathrm{Cov}}

\DeclareMathOperator{\prefixsample}{Prefix}
\DeclareMathOperator{\positionsample}{Select}
\DeclareMathOperator{\branchsample}{Branch}
\DeclareMathOperator{\logprobs}{LogProbs}
\DeclareMathOperator{\mean}{mean}
\DeclareMathOperator{\sg}{sg}